\documentclass[lettersize,journal]{IEEEtran}
\usepackage{amsmath,amsfonts}
\usepackage{algorithmic}
\usepackage{algorithm}
\usepackage{array}
\usepackage[caption=false,font=normalsize,labelfont=sf,textfont=sf]{subfig}
\usepackage{textcomp}
\usepackage{stfloats}
\usepackage{url}
\usepackage{verbatim}
\usepackage{graphicx}
\usepackage{cite}
\usepackage{multirow}
\usepackage{diagbox} 
\usepackage{booktabs}
\usepackage{makecell}
\usepackage{xcolor}
\usepackage{color,varioref}
\newcommand{\red}[1]{\begin{color}{black}#1\end{color}}

\usepackage[colorlinks,linkcolor=blue]{hyperref}

\usepackage{amsthm}   
\usepackage{amssymb}
\newtheorem{theorem}{Theorem}
\newtheorem{assumption}{Assumption}
\newtheorem{definition}{Definition}
\newtheorem{lemma}{Lemma}
\newtheorem{remark}{Remark}

\begin{document}

\title{Decoupled Structure-Feature Alignment via Alternating Optimization for Graph Learning}



\author{
Chengcheng Yan, Feifei Zhao, Dai Zhu,
Wei Liu,
and Qingsong Wang
%
%
\thanks{Corresponding author: Qingsong Wang.}
\IEEEcompsocitemizethanks{
\IEEEcompsocthanksitem Chengcheng Yan and Wei Liu are with the College of Artificial Intelligence, Shaoxing Institute of Technology, Shaoxing, 312000, China. (e‑mail: ycc956176796@gmail.com, weiliu@zju.edu.cn).
\IEEEcompsocthanksitem Feifei Zhao is with College of Science, Shihezi University, Shihezi, 832003, China. (email: zhaofeifei@shzu.edu.cn)
\IEEEcompsocthanksitem Dai Zhu is with Hunan Shaofeng Institute for Applied Mathematics, Xiangtan, 411102, China. (e‑mail: partout@gmail.com).
\IEEEcompsocthanksitem Qingsong Wang is with School of Mathematics and Computational Science, Xiangtan University, National Center for Applied Mathematics in Hunan, Xiangtan, 411105, China. (email: nothing2wang@hotmail.com). 

}

}



\maketitle

\begin{abstract}

Conventional Graph Neural Networks (GNNs) couple feature transformation and neighborhood aggregation, which often renders them vulnerable to topological noise and heterophilous connections. To decouple this dependency, we present a constrained two-view learning framework for robust graph learning, which aligns structure-aware GNN embeddings with a structure-free feature prior. Specifically, the proposed Decoupled Structure-Feature Alternating Learning  (DSAL) framework trains an independent anchor network using a self-supervised reconstruction objective to capture the intrinsic semantic information contained in node attributes. Within DSAL, to effectively integrate this prior, we design a channel-split adaptive gated (CSAG) layer. This architecture employs a gating mechanism to balance global spectral smoothing and local spatial representation dynamically. Furthermore, the model is optimized via a cyclic alternating procedure, which mitigates representation drift caused by mutual interference in standard joint optimization schemes. Experiments on diverse homophilous and heterophilous datasets suggest that our proposed approach provides improved node classification accuracy while maintaining robustness to structural perturbations compared to standard message-passing architectures.
\end{abstract}

\begin{IEEEkeywords}
Graph Neural Network, Alternating Optimization, Feature-Structure Alignment, Topological Robustness, Representation Learning.
\end{IEEEkeywords}

\section{Introduction}
\IEEEPARstart{G}{raph} neural networks (GNNs) \cite{corso2024graph} learn node representations by combining node attributes with information propagated through the graph structure. Given the topological structure $A$ and the node feature matrix $X$ of a graph, the conventional training objective of GNNs is typically formulated as the following unconstrained optimization problem:
\begin{equation}
\label{eq:original}
    \min_{\theta} \mathcal{L}(\Psi(A, X ; \theta), Y) ,
\end{equation}
where $\Psi (\cdot; \theta)$ denotes the neural network model for node-level prediction and $\theta = \{W_1, W_2, \dots, W_L\}$ denotes all trainable parameters of the GNN. Symbol $\mathcal{L}$ denotes the loss function, and $Y$ denotes the labels of nodes. Given an $L$-layer GNN, for problem \eqref{eq:original}, $\Psi$ denotes the composition of $L$ message-passing layers defined as follows:
\begin{equation*}
\begin{aligned}
    \Psi(A, X; \theta) = \mathcal{H}^L_\theta,  &\quad  \mathcal{H}^l_\theta = \sigma(A \mathcal{H}^{l-1}_\theta W_l) , \\  \mathcal{H}^0_\theta = X   , & \quad  l=1,\dots,L   ,
\end{aligned}
\end{equation*}
where $\sigma$ is an activation function, $H^l_\theta$ denotes the hidden representation at the $l$-th layer, and $W_l$ are the weight parameters of GNN. By adopting the aggregation operator $\sigma(A \mathcal{H}^{l-1}_\theta W_l)$, the model performs spatial smoothing over node representations in the non-Euclidean graph domain. This coupling allows GNNs to jointly transform node features and aggregate neighborhood information to learn stronger feature representations.

\begin{figure*}[h]
    \centering
    \includegraphics[width=1.0\linewidth]{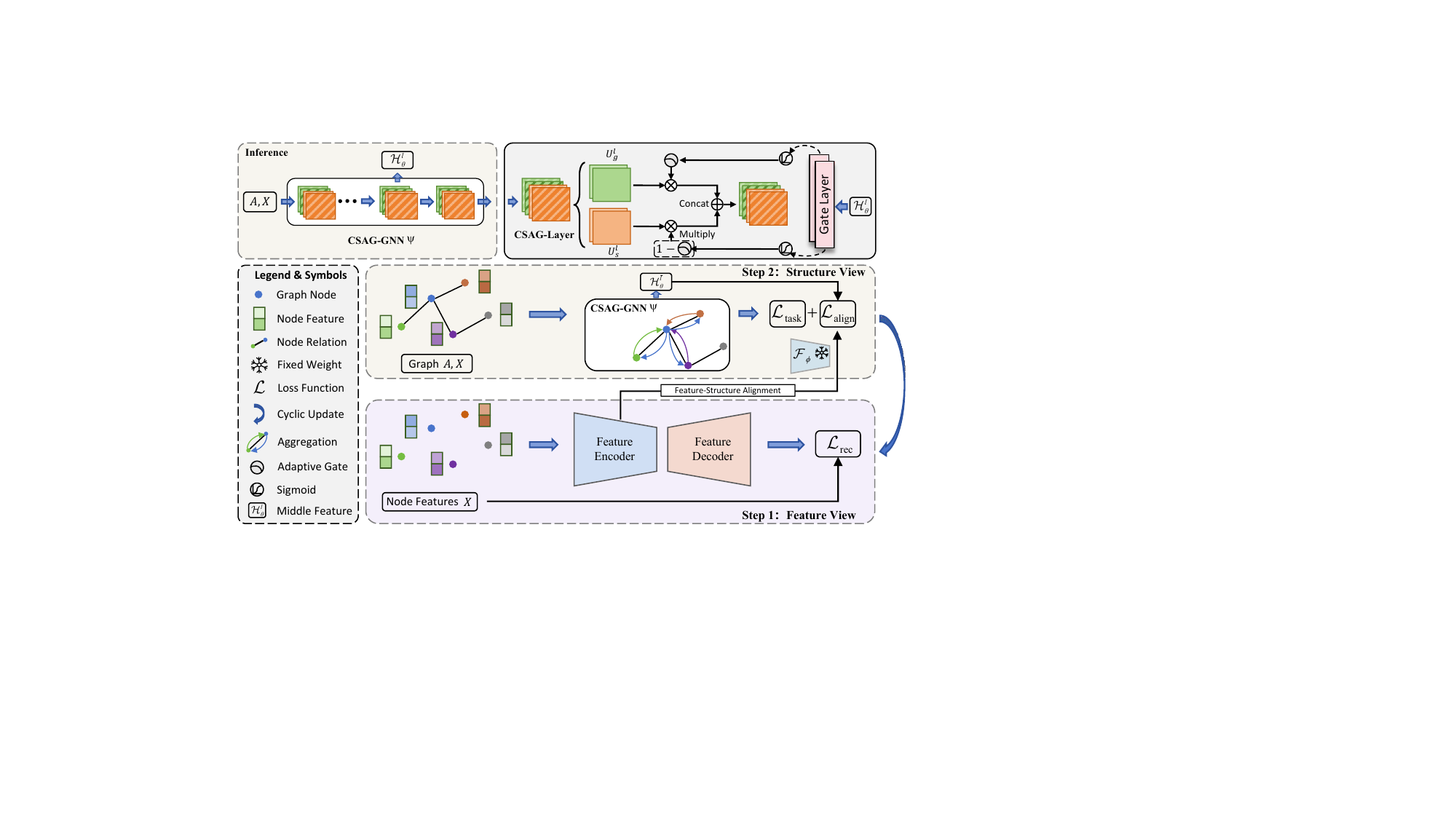}
    \caption{An overview of the proposed DSAL framework. The model decouples graph learning into a Feature View (Step 1) and a Structure View (Step 2) optimized via a cyclic alternating scheme. The feature encoder distills a prior $\mathcal{F}_\phi$ without structural information, which acts as an alignment anchor ($\mathcal{L}_{\text{align}}$) for the intermediate embeddings of DSAL. The embedded CSAG-Layer dynamically balances dual-branch aggregations using an adaptive gating mechanism.}
    \label{fig:csag-gnn}
\end{figure*}

However, the above procedure implicitly assumes that the observed topological structure $A$ is reliable and semantically consistent with the node labels \cite{platonov2023critical}. In practice, real-world graphs are often noisy, incomplete, and may even contain spurious or misleading edges \cite{zhou2023opengsl}. Moreover, heterophilous graphs violate many of the homophily assumptions underlying message-passing architectures. Therefore, structural aggregation may propagate erroneous signals across neighborhoods, thereby degrading representation quality and yielding unreliable predictions \cite{yang2021attributes, yan2022two}. As a result, the standard training paradigm fails to fully exploit the inherent discriminative information embedded in the original feature $X$. Several studies attempt to improve robustness through graph structure learning \cite{jin2020graph, zheng2020robust}, data augmentation \cite{zhao2021data}, or regularization techniques \cite{feng2020graph, chen2023agnn, yang2021rethinking}. Nevertheless, these methods still optimize representations from the graph topology structure and remain inherently dependent on the given topology $A$. When the topology is unreliable or corrupted, the learned embedding vectors may still drift \cite{wu2022nodeformer}.

To mitigate this problem, we instead use the node attributes as a topology-independent source of information. Specifically, we learn a feature representation from node features $X$ alone and use it as an anchor for the structural representation. This motivates us to construct an anchor model $\mathcal{F}_\phi$ from the independent observation dimension (i.e., feature view) that learns representations independent of the graph structure via an auxiliary loss function $\mathcal{L}_{\text{rec}}$, which is a self-supervised reconstruction loss for the encoder-decoder procedure. In this manner, $\mathcal{F}_\phi$ serves as a feature semantic prior. It captures the intrinsic attribute of the graph nodes and provides a stable reference for aligning the representations from the graph structure.

Therefore, we couple the two views through an explicit consistency constraint: \red{the alignment formulation explicitly couples two complementary information sources, ensuring that the semantics distilled from $X$ are injected into the task representation, while the task supervision prevents the feature prior from drifting away from the discriminative structure.} In this work, we reformulate the training of GNN as a constrained optimization problem by introducing the consistency constraint:
\begin{equation}
\begin{aligned}
\label{eq:main_problem}
\min_{\theta, \phi} \quad & \mathcal{L}_{\text{task}}( \Psi( A, X; \theta), Y)   +  \mathcal{L}_{\text{rec}}(\mathcal{F}_\phi (X)) \\
\text{s.t.} \quad & \mathcal{H}_{\theta}^{\bar{l}} = \mathcal{F}_\phi   , \quad  \bar{l} = \lceil L/2 \rceil,
\end{aligned}
\end{equation}
where $\lceil \cdot \rceil$ denotes the ceil operator. The consistency constraint encourages the middle-structured representation $\mathcal{H}_\theta^{\bar{l}}$ produced by the GNN to be aligned with the feature representation $\mathcal{F}_\phi$, which can be interpreted as imposing a regularization on the structural manifold induced by the feature manifold. \red{We align the feature prior with an intermediate representation rather than the final-layer output. The intermediate representation is less directly tied to the final classification layer and therefore provides the representation used for the consistency constraint.}

\red{
An intuitive way to solve problem \eqref{eq:main_problem} is to incorporate the consistency constraint condition through a penalty term and perform joint end-to-end optimization with both the task loss and the reconstruction loss. However, under joint backpropagation, gradients from $\mathcal{L}_{\text{task}}$ are propagated to $\phi$ through $\mathcal{L}_{\text{align}}$. When the topological structure $A$ contains substantial noise, these structural inaccuracies may be transferred into $\phi$, which can weaken the effectiveness of the learned feature prior from the feature view. Instead, we separate the two updates: the anchor network is trained only with the reconstruction loss $\mathcal{L}_{rec}$, after which the GNN is updated using the task loss and the alignment term, with the anchor held fixed. To this end, this can be interpreted as solving the following asymmetric two-view optimization problem:}
\begin{equation}
\begin{aligned}
\label{eq:mainloss}
\left\{
\begin{aligned}
\phi^{k+1}
&\approx \arg\min_{\phi} \mathcal{L}_{\text{rec}}\big(\mathcal{F}_\phi(X)\big),
\\
\theta^{k+1}
&\approx \arg\min_{\theta} 
\Big\{
\mathcal{L}_{\text{task}}\big(\Psi(A,X;\theta),Y\big)
+
\mathcal{L}_{\text{align}}\big(
\mathcal{H}_\theta^{\bar l},
\mathcal{F}_{\phi^{k+1}}
\big)
\Big\},
\end{aligned}
\right.
\end{aligned}
\end{equation}
where the feature align loss function $\mathcal{L}_{\text{align}} =  D (\mathcal{H}_\theta^{\bar{l}}, \mathcal{F}_{\phi^{k+1}} )$, here $D(x, y) = \frac{\rho}{2} \| x - y \|^2$ is a quadratic penalty function measuring the discrepancy between two representations. In practice, $\phi^{k+1}$ is not solved exactly. Instead, we maintain an iterative estimate $\phi^k$, which is updated via a few steps of stochastic gradient descent on $\mathcal{L}_{\text{rec}}$ and serves as an online approximation to $\phi^{k+1}$. The subsequent structure-view update then aligns $\mathcal{H}_\theta^{\bar{l}}$ with the current anchor model $\mathcal{F}_{\phi^{k+1}}$. Specifically, during the update procedure, the anchor representation $\mathcal{F}_{\phi^{k+1}}$ is treated as a fixed reference, and gradients are not propagated back to $\phi$.

Formally, under this inexact optimization scheme, \eqref{eq:mainloss} gives rise to the proposed Decoupled Structure-Feature Alternating Learning (DSAL) framework. Specifically, as shown in Figure \ref{fig:csag-gnn}, we alternately update the feature-view parameters $\phi$ to strengthen a semantic prior without structure information. The structure-view parameters $\theta$ using a Channel-Split Adaptive Gated (CSAG) layer to optimize the task objective while aligning to the current prior, to obtain more balanced task prediction results.

In this paper, our main contributions are summarized as follows:
\begin{itemize}
    \item We introduce a two-view perspective for node prediction that explicitly bridges a structure encoder and a structure-free feature prior. Specifically, we introduce an anchor (feature-view) model that learns semantic representations from node attributes alone and aligns this prior information with intermediate structural embeddings produced by the GNN, resulting in a constrained optimization method for robust graph learning.
    \item To optimize the asymmetric two-view optimization problem \eqref{eq:mainloss}, we use a cyclic alternating update to train the feature and structure views separately. During the structure update, the feature representation is fixed, which prevents task gradients from being propagated into the feature-view encoder. In addition, we provide a theoretical analysis of the proposed algorithm in the \hyperref[appendix_list]{Section IV}.
    \item We evaluate the method on six node classification benchmarks covering both homophily and heterophily, and further examine component contributions, topology perturbations, update strategies, hyperparameter sensitivity, and computational overhead.
\end{itemize}

\section{Related Work}
\label{sec:related}

\textbf{Graph Neural Networks and Topological Robustness.} GNNs are widely used for modeling data with non-Euclidean structures. Their core mechanism is message passing, where node representations are updated by aggregating information from neighboring nodes. In methods such as GCN \cite{kipf2017semi} and GAT \cite{velickovic2018graph}, the learned representations are closely coupled with the input adjacency matrix A, making model performance sensitive to the quality of graph topology.

In practice, graph structure may contain noise arising from missing edges, spurious connections, or adversarial perturbations. Such imperfections can affect information propagation across layers and lead to over-smoothing \cite{wu2019simplifying}. To address these challenges, existing methods can be broadly categorized into Graph Structure Learning (GSL) \cite{jin2020graph} and regularization approaches designed to improve robustness \cite{chen2023agnn, yang2021rethinking, liu2024robust, ning2025summary}. GSL methods aim to refine or reconstruct the adjacency matrix during training, often by learning pairwise similarities or optimizing structural consistency with task objectives. Regularization approaches, on the other hand, introduce additional constraints or data augmentation strategies to improve robustness under structural perturbations.

Despite these efforts, most approaches rely on joint optimization of model parameters and graph structure within a coupled framework. This coupling makes them sensitive to the initial graph, which may be unreliable. In cases of severely corrupted topology or high heterophilous graphs \cite{zhu2020beyond}, optimization that relies heavily on the graph topology may lead to suboptimal representations.

\textbf{Two-View Learning and Feature–Structure Alignment on Graphs.} To overcome the limitations of relying solely on topology, two-view learning and graph contrastive learning have been widely adopted. These methods typically treat the feature space and the structure space as parallel sources of information. For example, the Deep Graph Infomax (DGI) \cite{velickovic2018deep} method enhances model generalization by maximizing the feature information between node representations that rely on graph structure and representations derived solely from node features through a Multi-Layer Perceptron (MLP). Furthermore, related work in graph knowledge distillation includes GLNN \cite{zhang2021graphless}, which transfers structural knowledge from a GNN teacher to an MLP student model, enabling inference without explicit graph structure. 

However, existing two-view alignment strategies predominantly employ a joint optimization paradigm with multiple objectives, formulated as $\mathcal{L}_{\text{total}} = \mathcal{L}_{\text{task}} + \lambda \mathcal{L}_{\text{align}}$. In scenarios where the graph topology is unreliable, this single-level joint optimization is prone to feature drift. Due to the lack of independent constraints on the feature view, clean node feature representations can become contaminated by erroneous gradients stemming from noisy topology, causing the two views to align within an incorrect semantic space \cite{han2023alternately}.

\textbf{Alternating Optimization in Neural Network Models.} Alternating optimization is commonly used when different parameter blocks are associated with different objectives or constraints. In deep learning \cite{lecun2015deep}, alternating optimization is frequently employed to solve complex objective functions involving multiple sets of non-convex parameters, such as in meta-learning \cite{hospedales2021meta} and generative adversarial networks (GANs) \cite{goodfellow2020generative}. Unlike traditional stochastic gradient descent (SGD) \cite{robbins1951stochastic}, which directly minimizes an unconstrained joint loss, alternating optimization \cite{zeng2019global} minimizes one set of variables while holding another fixed. In these methods, one parameter block is optimized while the other is held fixed, which separates the corresponding update steps. While a limited number of studies in the field of graph learning \cite{xia2021graph} have explored alternating optimization to update graph structures and network parameters, few have applied this technique to the consistent alignment of feature and structural representations.

The approximate solution framework proposed in \eqref{eq:mainloss} leverages alternating optimization to isolate gradient interference between $\mathcal{F}_\phi$ and $\mathcal{H}_\theta$. This block-coordinate update mechanism \cite{beck2013convergence, cai2023cyclic} ensures that the feature prior model, while independently capturing the attribute manifold, serves as a stable anchor for structural view optimization, thereby preventing mutual drift during the representation alignment process.

\section{Methodology}
\label{sec:method}

\subsection{Notations}

Let $\mathcal{G} = (\mathcal{V}, \mathcal{E})$ denote a graph, where $\mathcal{V} = \{v_1, v_2, \cdots, v_n\}$ is the set of $n$ nodes and $\mathcal{E}$ is the set of edges. The topological structure of $\mathcal{G}$ is represented by an adjacency matrix $A \in \mathbb{R}^{n \times n}$. Specifically, $A_{ij} > 0$ indicates the presence of an edge between nodes $v_i$ and $v_j$, while $A_{ij} = 0$ otherwise. We define the input node feature matrix as $X \in \mathbb{R}^{n \times d}$, where $d$ is the dimension of the node attributes. Furthermore, let $l \in \{1, \cdots, L\}$ denote the layer index. The symbols $H_g^l \in \mathbb{R}^{n \times d_l}$ and $H_s^l \in \mathbb{R}^{n \times d_l}$ denote the latent representations on the $l$-th layer, extracted by the Graph Convolutional Network (GCN) branch and the GraphSAGE branch, respectively. Additionally, the symbol $\| \cdot \|$ is defined as the $\ell_2$ norm $\| \cdot \|_2$ for vector inputs, and as the Frobenius norm $\| \cdot \|_F$ for matrix inputs.

\subsection{Overview of DSAL Framework}

In the above optimization framework, the original GNN is responsible for fitting the target labels, while the model $\mathcal{F}$ serves to guide the GNN using representations derived solely from the node features $X$. The overall update procedure is summarized in Algorithm~\ref{algo:csag-code}.

\begin{algorithm}[!ht]
\caption{DSAL Framework}
\begin{algorithmic}[1]
\label{algo:csag-code}
\STATE $\mathbf{Input:~}$ Adjacency matrix $A$, Node features $X$, Node label $Y$, Max iteration $K$, Hyperparameter $\rho$, and Latent representations $\mathcal{H}_\theta^{\bar{l}}$.
\STATE $\mathbf{Initialize:~}$ Anchor network weights $\phi^0$, CSAG-GNN weights $\theta^0$.
\FOR{ $k = 0, \cdots, K-1$ }
    \STATE \textbf{Step 1: Update weights $\phi$ of anchor network}
    \STATE \quad Compute the reconstruction loss to refine the feature prior: 
    \[ \mathcal{L}_{\text{rec}}(\mathcal{F}_{\phi^k} (X)) = \| \mathcal{F} ( X ; \phi^k,\cdot )  - X \|^2 \]
    \vspace{-0.35cm}
    \STATE \quad Update $\phi$ by taking stochastic gradient descent steps:
    \[ \phi^{k+1} \leftarrow \phi^{k} - \eta \nabla_{\phi} \mathcal{L}_{\text{rec}}(\mathcal{F}_{\phi^k} (X)) \]
    \vspace{-0.35cm}
    \STATE \textbf{Step 2: Update weights $\theta$ of CSAG-GNN}
    \STATE \quad Freeze the updated anchor network $\mathcal{F}_{\phi^{k+1}}$.
    \STATE \quad Compute the feature alignment constraint:
    \[ \mathcal{L}_{\text{align}} (\mathcal{H}_\theta^{\bar{l}} , \mathcal{F}_{\phi^{k+1}})   = D (\mathcal{H}_{\theta}^{\bar{l}} , \mathcal{F}_{\phi^{k+1}}) = \frac{\rho}{2} \| \mathcal{H}_{\theta}^{\bar{l}} - \mathcal{F}_{\phi^{k+1}} \|^2 \]
    \vspace{-0.35cm}
    \STATE \quad Update $\theta$ by minimizing loss via gradient descent:
    \[ \theta^{k+1} \leftarrow \arg\min_\theta \mathcal{L}_{\text{task}}( \Psi(  A, X ;\theta) , Y) + D (\mathcal{H}_\theta^{\bar{l}} , \mathcal{F}_{\phi^{k+1}} ) \]
\ENDFOR
\STATE $\mathbf{Output:~}$ Model weights $\theta^K, \phi^K$.
\end{algorithmic}
\end{algorithm}

\subsubsection{Feature View: Anchor Network $\mathcal{F}_\phi$}

In our analysis, to generate a reliable anchor feature representation for $\mathcal{F}_\phi$, inspired by the idea of the AutoEncoder method  \cite{hinton2006reducing}, we designed an AutoEncoder network $\mathcal{F}$ to obtain the final node feature. Therefore, the corresponding reconstruction loss has
\begin{equation}
    \min_{\phi, \vartheta } \quad \| \mathcal{F} ( X ;\phi,\vartheta )  - X \|^2   ,
\end{equation}
where $\mathcal{F}(\phi) := \mathcal{F}_{\phi}$ denotes the encoder and $\mathcal{F}(\vartheta) := \mathcal{F}_{\vartheta}$ denotes the decoder in the AutoEncoder network $\mathcal{F}$. Under this formulation, from the feature perspective, the reconstruction loss encourages the encoder $\mathcal{F}_{\phi}$ to extract the intrinsic representation of the original graph nodes, thereby providing effective guidance for downstream task training.

As the primary objective of an AutoEncoder is to reconstruct the input data, the reconstruction procedure consists of two stages: encoding and decoding. From Step 1 in Figure \ref{fig:csag-gnn}, the encoder learns to extract informative latent representations from the input features, whereas the decoder reconstructs the input from these representations. In our framework, we are primarily interested in the representations learned by the encoder. Therefore, when training the anchor network, we explicitly define the training objective of the encoder $\mathcal{F}_{\phi}$ as follows:
\begin{equation}
    \mathcal{L}_{\text{rec}}(\mathcal{F}_\phi (X)) = \| \mathcal{F} ( X ; \phi,\cdot )  - X \|^2    .
\end{equation}

\subsubsection{Structure View: Adaptive Dual-Branch Aggregation for CSAG-GNN}

\textbf{(a) Dual-Branch Aggregation.}  To simultaneously capture global spectral dependencies and local spatial context, we use two aggregation branches to capture different neighborhood patterns. The GCN branch provides the smoothing behavior of graph convolution, while the GraphSAGE branch uses sampled neighborhood aggregation. To this end, we first consider a dual-branch architecture. In this framework, given the input middle feature $\mathcal{H}^l_{\theta}$, the output feature $\mathcal{H}^{l+1}_\theta \in \mathbb{R}^{n \times d_{l+1}}$ is concatenated from the two aforementioned features. As shown in Figure \ref{fig:csag-gnn} and \eqref{split-ori}, two branch outputs with complementary operation:
\[
U_g^l = \text{GCN}(A, \mathcal{H}^l_{\theta}) \in \mathbb{R}^{n \times d_g}, \\  \; U_s^l = \text{SAGE}(A, \mathcal{H}^l_{\theta}) \in \mathbb{R}^{n \times d_s},
\]
where $d_g + d_s = d_{l+1}$ is controlled by the channel allocation ratio $\gamma$. 

Therefore, by joining the output feature $U_g^l$ from GCN layer and the output feature $U_s^l$ from GraphSAGE layer at $l$-th layer, the output feature $\mathcal{H}^{l+1}_{\theta}$ can be formulated
\begin{equation}
\label{split-ori}
    \mathcal{H}^{l+1}_{\theta} = \left[U^{l}_{g} \parallel U^{l}_{s}\right], \quad d_{l+1} = d_g = d_s ,
\end{equation}
where the symbol $\parallel$ denotes the channel-wise concatenation operation and the two branch outputs $U_g^l \in \mathbb{R}^{n \times d_g}$ and $U_s^l \in \mathbb{R}^{n \times d_s}$ respectively. Symbol $n$ denotes the total number of nodes, and $d$ denotes the number of channels. 

Nevertheless, according to \eqref{split-ori}, if the inputs to the network layers are kept identical, concatenating the outputs of two layers along the channel dimension results in a doubled feature dimensionality, thereby increasing the parameter cost. To alleviate this issue, we let the output channels of the two layers are proportionally allocated, such that the output dimensionality matches that of the original layer, while the number of parameters lies between the two individual layers. Based on this analysis, \eqref{split-ori} can reformulated as
\begin{equation}
\label{split-final1}
    \mathcal{H}^{l+1}_{\theta} = \left[U^{l}_{g} \parallel U^{l}_{s}\right], \quad d_{g} = \lfloor \gamma \cdot d_l \rfloor, \quad d_{s} = d_l - d_{g}, 
\end{equation}
where $\lfloor \cdot \rfloor$ denotes the floor operation and the channel allocation ratio $\gamma \in (0, 1)$. In this style, compared to enforcing a single aggregation mechanism, we decompose the output representation space into two subspaces, allowing the model to allocate specific channel capacities to different topological views.

\textbf{(b) Dual Aggregation with Adaptive Gate Layer.}  
Although the dual-branch style \eqref{split-final1} successfully generates complementary representations (spectral smoothing via GCN and spatial preservation via GraphSAGE), a simple concatenation provides no explicit mechanism to adaptively regulate their contributions. For example, in graphs with varying levels of homophily, the preferred balance between smoothing and local discrimination may vary across nodes: nodes within homophilous communities may benefit from stronger smoothing, whereas boundary nodes or nodes embedded in heterophilous neighborhoods may require more discriminative aggregation to reduce neighbor contamination \cite{pei2024multi}.

To resolve the above conflict, we need a special layer controller that dynamically modulates the two branches based on the current node representation, enabling soft routing between global smoothing and local discrimination. For this purpose, we introduce a lightweight gate $\alpha^l \in (0,1)$ computed from the layer input $\mathcal{H}^l_{\theta}$. Therefore, the gate layer can be formulated as:
\[
\alpha^l = \sigma(W_\alpha \mathcal{H}^l_{\theta} + b_\alpha) \in \mathbb{R}^{n \times 1},
\]
where $\sigma(\cdot)$ denotes the sigmoid function and $W_{\alpha}$ is a learnable weight vector of the gate layer, $b_\alpha$ means the bias. The gate adds only a small number of trainable parameters relative to the graph convolution layers. It can adaptively modulates the contributions of the two branches before fusion:
\[
\tilde{U}_g^l = \alpha^l \odot U_g^l, \qquad \tilde{U}_s^l = (1 - \alpha^l) \odot U_s^l ,
\]
where $\odot$ denotes the Hadamard product, implemented via the broadcast mechanism. The gate assigns a weight to each node for the two branch outputs before concatenation. We empirically validate the necessity of the gating mechanism in the ablation study.

Therefore, the output $\mathcal{H}^{l+1}_{\theta}$ of the CSAG-Layer is merged by channel-wise concatenation. Thus, for \eqref{split-final1}, which can be written as:
\begin{equation}
\label{split-final2}
\mathcal{H}^{l+1}_{\theta} = \left[\tilde{U}_g^l \parallel \tilde{U}_s^l\right].
\end{equation}

The value of $\alpha$ determines the relative contribution of the GCN and GraphSAGE branches: larger values increase the weight of the GCN output, while smaller values favor the GraphSAGE output. Remarkably, this dynamic adaptability is achieved with negligible parameter overhead (only $O(d_{l})$).

\subsection{Self-Consistent Cyclic Learning}
As an inexact solution to the optimization problem \eqref{eq:mainloss}, we adopt the following approximate alternating update scheme:

\textbf{Step 1: Update weights $\phi$ of anchor network.} We update the weight parameters $\phi$ of the anchor network by taking the stochastic gradient descent steps to minimize the auxiliary objective:
\begin{equation}
    \phi^{k+1} \leftarrow \arg\min_\phi   \mathcal{L}_{\text{rec}}(\mathcal{F}_\phi (X))   ,
\end{equation}
which can be viewed as approximating ${\phi}^{k+1}$ in \eqref{eq:mainloss} by taking a few gradient steps on the reconstruction loss function $\mathcal{L}_{\text{rec}}$ at each iteration and provides a progressively refined feature prior, then updating $\theta$ with respect to both task loss and alignment to the current anchor. 

\textbf{Step 2: Update weights $\theta$ of CSAG-GNN.} Given the updated frozen anchor network $\phi$, we update the CSAG-GNN parameters $\theta$ by minimizing the task loss regularized by the alignment constraint, which utilizes the fixed-weight anchor network $\mathcal{F}_{\phi}$. Therefore, the $\arg\min$ procedure can be formulated as
\begin{equation}
\label{eq:solvetheta}
\theta^{k+1} \leftarrow \arg\min_\theta \mathcal{L}_{\text{task}}( \Psi(  A, X ;\theta) , Y)    +  \frac{\rho}{2} \| \mathcal{H}_\theta^{\bar{l}} - \mathcal{F}_{\phi^{k+1}} \|^2   ,
\end{equation}
where $\mathcal{L}_{\text{task}}$ denotes the cross-entropy classification loss. Additionally, \eqref{eq:solvetheta} can be solved by the standard gradient descent algorithm.

\section{Theorem Analysis}
\label{appendix_list}

In this section, we analyze the convergence theorem of Algorithm~\ref{algo:csag-code}. For theoretical tractability and to match practical implementations, we implement the inner \(\arg\min\) in Step~2 of Algorithm~\ref{algo:csag-code} with a single gradient descent step with step size \(\beta>0\). This yields the following update rule:
\begin{align}
\phi^{k+1} &\leftarrow \phi^{k} - \eta \nabla_{\phi}\mathcal{L}_{\mathrm{rec}}(\phi^{k}), \label{eq:phi_update} \\
\theta^{k+1} &\leftarrow \theta^{k} - \beta \nabla_{\theta}\mathcal{J}^{k}(\theta^{k}), \label{eq:theta_update}
\end{align}
where
\begin{equation}
\label{eq:Jk}
\mathcal{J}^{k}(\theta) = \mathcal{L}_{\mathrm{task}}(\theta) + \frac{\rho}{2} \left\|
\mathcal{H}_{\theta}^{\bar{l}} - \mathcal{F}_{\phi^{k+1}} \right\|^{2}.
\end{equation}

Since \eqref{eq:phi_update} is independent of \(\theta\), while \eqref{eq:theta_update}
depends on \(\phi^{k+1}\). Thus the scheme has a one-way coupling \(\phi\to\theta\),
and the objective \(\mathcal{J}^{k}\) changes over iterations because the alignment anchor \(\mathcal{F}_{\phi^{k+1}}\) evolves with \(k\).

To analyze the property of the theorem of the proposed DSAL method, we impose the following assumptions.

\begin{assumption}[Smoothness and lower boundedness of reconstruction loss]
\label{assum:rec}
The reconstruction loss \(\mathcal{L}_{\mathrm{rec}}(\phi)\) is lower bounded, and
\(\nabla_{\phi}\mathcal{L}_{\mathrm{rec}}\) is \(L_{\phi}\)-Lipschitz continuous.
\end{assumption}

\begin{assumption}[Lipschitz continuity of the anchor mapping]
\label{assum:anchor_lipschitz}
There exists a constant \(L_{F}>0\) such that for all \(\phi_1,\phi_2\) in a bounded parameter
region,
\[
\left\|\mathcal{F}_{\phi_1}-\mathcal{F}_{\phi_2}\right\|
\leq
L_{F}\left\|\phi_1-\phi_2\right\|.
\]
\end{assumption}

\begin{assumption}[Bounded Jacobian of the GNN representation]
\label{assum:jacobian}
Let \(V\mapsto \sum_{i,j} V_{ij}\nabla_{\theta}[\mathcal{H}_{\theta}^{\bar{l}}]_{ij}\)
be the linear map induced by the Jacobian of the GNN representation. There exists a constant \(B_{H}>0\) such that, for all \(\theta\) in a bounded parameter region,
\[
\sup_{\|V\|\leq 1} \left\|
\sum_{i,j} V_{ij}\nabla_{\theta}[\mathcal{H}_{\theta}^{\bar{l}}]_{ij}
\right\|
\leq
B_{H}.
\]
\end{assumption}

\begin{assumption}[Smoothness of the limit structural objective]
\label{assum:limit_theta}
Let \(\phi^{*}\) be a stationary point of \(\mathcal{L}_{\mathrm{rec}}\), and according to \eqref{eq:Jk}, we define
\[
\mathcal{J}^{*}(\theta) = \mathcal{L}_{\mathrm{task}}(\theta) + \frac{\rho}{2} \left\| \mathcal{H}_{\theta}^{\bar{l}} - \mathcal{F}_{\phi^{*}} \right\|^{2}.
\]
Then \(\mathcal{J}^{*}\) is lower bounded and \(\nabla_{\theta}\mathcal{J}^{*}\) is \(L_{\theta}\)-Lipschitz continuous.
\end{assumption}

\begin{assumption}[Convergence of the anchor network parameters]
\label{assum:phi_convergence}
The sequence \(\{\phi^{k}\}\) converges to a stationary point \(\phi^{*}\), i.e.,
\[
\lim_{k\to\infty}\phi^{k}=\phi^{*},
\qquad
\nabla_{\phi}\mathcal{L}_{\mathrm{rec}}(\phi^{*})=0.
\]

This property can be guaranteed under standard sufficient decrease and relative error conditions together with the Kurdyka–\L ojasiewicz inequality.
\end{assumption}

\subsection{Convergence of the Anchor Network $\phi$}

The first step of the cyclic scheme is a standard gradient descent iteration on \(\mathcal{L}_{\mathrm{rec}}\). Hence, its convergence follows from the classical smooth optimization arguments.

\begin{lemma}[Descent of the reconstruction loss]
\label{lem:rec_descent}
Under Assumption \ref{assum:rec} and step size \(0<\eta<2/L_{\phi}\), the iterates satisfy
\[
\mathcal{L}_{\mathrm{rec}}(\phi^{k+1}) \leq \mathcal{L}_{\mathrm{rec}}(\phi^{k}) - C_{\phi} \left\|\nabla_{\phi}\mathcal{L}_{\mathrm{rec}}(\phi^{k})\right\|^{2},
\]
where
\(C_{\phi}=\eta\left(1-\frac{L_{\phi}\eta}{2}\right)>0\).
\end{lemma}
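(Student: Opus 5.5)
The plan is to use the standard descent lemma for functions with Lipschitz continuous gradients, which Assumption \ref{assum:rec} supplies. First I will establish the quadratic upper bound: for any $\phi,\phi'$,
\[
\mathcal{L}_{\mathrm{rec}}(\phi') \leq \mathcal{L}_{\mathrm{rec}}(\phi) + \langle \nabla_{\phi}\mathcal{L}_{\mathrm{rec}}(\phi), \phi'-\phi\rangle + \frac{L_{\phi}}{2}\|\phi'-\phi\|^{2}.
\]
To prove it, I will write $\mathcal{L}_{\mathrm{rec}}(\phi')-\mathcal{L}_{\mathrm{rec}}(\phi)$ as the integral over $t\in[0,1]$ of $\langle \nabla_{\phi}\mathcal{L}_{\mathrm{rec}}(\phi+t(\phi'-\phi)),\phi'-\phi\rangle$. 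I then subtract the constant term $\langle\nabla_{\phi}\mathcal{L}_{\mathrm{rec}}(\phi),\phi'-\phi\rangle$. By Cauchy--Schwarz and the $L_{\phi}$-Lipschitz property, the remainder is bounded by $\int_0^1 L_{\phi} t\|\phi'-\phi\|^2\,dt = \frac{L_{\phi}}{2}\|\phi'-\phi\|^2$.

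Next I substitute the update \eqref{eq:phi_update}, taking $\phi=\phi^{k}$ and $\phi'=\phi^{k+1}=\phi^{k}-\eta\nabla_{\phi}\mathcal{L}_{\mathrm{rec}}(\phi^{k})$. The inner product becomes $-\eta\|\nabla_{\phi}\mathcal{L}_{\mathrm{rec}}(\phi^{k})\|^{2}$, and the quadratic term becomes $\frac{L_{\phi}\eta^{2}}{2}\|\nabla_{\phi}\mathcal{L}_{\mathrm{rec}}(\phi^{k})\|^{2}$. Adding these gives exactly
\[
\mathcal{L}_{\mathrm{rec}}(\phi^{k+1})\leq \mathcal{L}_{\mathrm{rec}}(\phi^{k}) - \eta\Bigl(1-\frac{L_{\phi}\eta}{2}\Bigr)\|\nabla_{\phi}\mathcal{L}_{\mathrm{rec}}(\phi^{k})\|^{2}.
\]
Finally, $0<\eta<2/L_{\phi}$ gives $1-L_{\phi}\eta/2>0$, so $C_{\phi}>0$.

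There is no real obstacle; the only delicate point is the integral form of the descent lemma. It needs $\mathcal{L}_{\mathrm{rec}}$ to be continuously differentiable along the segment joining $\phi^{k}$ and $\phi^{k+1}$. This is implicit in Assumption \ref{assum:rec}, since a globally Lipschitz gradient is in particular continuous. One remark on the setting: the lemma treats the deterministic gradient step \eqref{eq:phi_update}, not the stochastic version, so I will prove it for the exact gradient as stated. Lower boundedness is not needed here; it will only be used later, when the inequality is summed over $k$.
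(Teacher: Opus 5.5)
Your proposal is correct and follows the paper's proof: apply the descent lemma given by the $L_{\phi}$-Lipschitz gradient, substitute $\phi^{k+1}-\phi^{k}=-\eta\nabla_{\phi}\mathcal{L}_{\mathrm{rec}}(\phi^{k})$, and obtain $C_{\phi}=\eta(1-L_{\phi}\eta/2)>0$ from $0<\eta<2/L_{\phi}$. The only difference is that the paper cites the descent lemma, while you derive it from the integral form, which makes your argument self-contained without changing the route.
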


\begin{proof}
By Assumption \ref{assum:rec}, \(\nabla_{\phi}\mathcal{L}_{\mathrm{rec}}\) is
\(L_{\phi}\)-Lipschitz. Thus the descent lemma \cite{beck2017first} gives
\begin{align*}
    &\mathcal{L}_{\mathrm{rec}}(\phi^{k+1}) \\
    \leq& \mathcal{L}_{\mathrm{rec}}(\phi^{k}) + \left\langle \nabla_{\phi}\mathcal{L}_{\mathrm{rec}}(\phi^{k}), \phi^{k+1}-\phi^{k} \right\rangle  + \frac{L_{\phi}}{2} \left\|\phi^{k+1}-\phi^{k}\right\|^{2}.
\end{align*}
Substituting \eqref{eq:phi_update} and let step size \(0<\eta<2/L_{\phi}\), simplifying yields the above inequality.
\end{proof}

\begin{lemma}[Stationarity of the anchor network]
\label{lem:rec_stationary}
Under Assumptions \ref{assum:rec} and \ref{assum:phi_convergence}, we have
\[
\lim_{k\to\infty}
\left\|\nabla_{\phi}\mathcal{L}_{\mathrm{rec}}(\phi^{k})\right\|
=0.
\]
\end{lemma}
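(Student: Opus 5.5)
Two routes give the claim; I will present the direct one first and keep the descent-based argument as a cross-check.

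\textbf{Route 1 (continuity).} By Assumption~\ref{assum:rec}, $\nabla_{\phi}\mathcal{L}_{\mathrm{rec}}$ is $L_{\phi}$-Lipschitz, and therefore continuous. This gives, for every $k$,
\[
\bigl\|\nabla_{\phi}\mathcal{L}_{\mathrm{rec}}(\phi^{k})-\nabla_{\phi}\mathcal{L}_{\mathrm{rec}}(\phi^{*})\bigr\|\le L_{\phi}\|\phi^{k}-\phi^{*}\|.
\]
Assumption~\ref{assum:phi_convergence} gives $\phi^{k}\to\phi^{*}$ and $\nabla_{\phi}\mathcal{L}_{\mathrm{rec}}(\phi^{*})=0$. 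Hence the right-hand side tends to zero, and the gradient norm $\|\nabla_{\phi}\mathcal{L}_{\mathrm{rec}}(\phi^{k})\|$ vanishes with it. This proves the claim.

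\textbf{Route 2 (descent, cross-check).} The same conclusion follows from Lemma~\ref{lem:rec_descent} and Assumption~\ref{assum:rec} alone, without invoking convergence of the iterates. Sum the descent inequality from $k=0$ to $K-1$ and telescope:
\[
C_{\phi}\sum_{k=0}^{K-1}\|\nabla_{\phi}\mathcal{L}_{\mathrm{rec}}(\phi^{k})\|^{2}\le \mathcal{L}_{\mathrm{rec}}(\phi^{0})-\inf\mathcal{L}_{\mathrm{rec}}.
\]
The right-hand side is finite because $\mathcal{L}_{\mathrm{rec}}$ is bounded below. So the series of squared gradient norms is finite, its terms tend to zero, and the gradient norm tends to zero. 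Since $C_{\phi}>0$ requires $0<\eta<2/L_{\phi}$, this route carries the step-size condition of Lemma~\ref{lem:rec_descent}; Route~1 does not need it.

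\textbf{Main point to check.} The only real subtlety is domain. If the Lipschitz property in Assumption~\ref{assum:rec} is meant only on a bounded region, both routes still work. In Route~1 the iterates converge, so from some index on they lie in a bounded neighbourhood of $\phi^{*}$. In Route~2, Assumption~\ref{assum:rec} is exactly what Lemma~\ref{lem:rec_descent} already uses.
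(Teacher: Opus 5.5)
Your proposal is correct, and your Route~2 is exactly the paper's proof: telescope the descent inequality of Lemma~\ref{lem:rec_descent}, use the lower bound on $\mathcal{L}_{\mathrm{rec}}$ to get $\sum_k\|\nabla_{\phi}\mathcal{L}_{\mathrm{rec}}(\phi^{k})\|^{2}<\infty$, and conclude that the terms vanish. Your Route~1 is genuinely different. It uses exactly the hypotheses the lemma lists, Assumptions~\ref{assum:rec} and \ref{assum:phi_convergence}; in fact it needs only continuity of the gradient at $\phi^{*}$, and no step-size restriction. The paper's argument, by contrast, never uses Assumption~\ref{assum:phi_convergence}. It also silently imports $0<\eta<2/L_{\phi}$ through Lemma~\ref{lem:rec_descent}, a condition absent from the lemma's statement. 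Measured against the stated hypotheses, Route~1 is therefore the one that fits. The price is that Route~1 is nearly tautological: Assumption~\ref{assum:phi_convergence} already posits convergence to a stationary point, so the lemma reduces to a continuity remark and says nothing about the algorithm. The paper's route is the substantive one, because it derives vanishing gradients from the gradient step itself without assuming the iterates converge. It also comes with a rate, $\min_{0\le k<K}\|\nabla_{\phi}\mathcal{L}_{\mathrm{rec}}(\phi^{k})\|^{2}\le(\mathcal{L}_{\mathrm{rec}}(\phi^{0})-\inf\mathcal{L}_{\mathrm{rec}})/(C_{\phi}K)$, which Route~1 cannot give without a rate for $\|\phi^{k}-\phi^{*}\|$. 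One small caveat on your domain remark: under a merely local Lipschitz bound, the descent inequality behind Route~2 would also need the segments $[\phi^{k},\phi^{k+1}]$ to stay in that region. Since Assumption~\ref{assum:rec} is global in the paper, this costs nothing here.
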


\begin{proof}
From Lemma \ref{lem:rec_descent}, summing from \(k=0\) to \(K\) gives
\[
C_{\phi}
\sum_{k=0}^{K}
\left\|\nabla_{\phi}\mathcal{L}_{\mathrm{rec}}(\phi^{k})\right\|^{2}
\leq
\mathcal{L}_{\mathrm{rec}}(\phi^{0}) - \mathcal{L}_{\mathrm{rec}}(\phi^{K+1}).
\]
Since \(\mathcal{L}_{\mathrm{rec}}\) is lower bounded, the right-hand side is uniformly bounded.
Thus \(\sum_{k=0}^{\infty}\|\nabla_{\phi}\mathcal{L}_{\mathrm{rec}}(\phi^{k})\|^{2}\) converges, which yields
\[
\lim_{k\to\infty}
\left\|\nabla_{\phi}\mathcal{L}_{\mathrm{rec}}(\phi^{k})\right\|
=0.
\]
\end{proof}

\subsection{Vanishing Gradient Drift of the Structural Subproblem}

We now analyze how the evolving anchor affects the structural update. The key feature of the proposed cyclic scheme is that the structural subproblem \(\mathcal{J}^{k}\) is not fixed during training. We now quantify the gap between the gradient of \(\mathcal{J}^{k}\) and the gradient of the limit objective \(\mathcal{J}^{*}\). For the sake of analysis, the gradient drift vector is defined by
\begin{equation}
\label{eq:drift}
d_k  = \nabla_{\theta}\mathcal{J}^{k}(\theta^{k}) - \nabla_{\theta}\mathcal{J}^{*}(\theta^{k}).
\end{equation}
Therefore, during the iteration procedure, we have the following theorem result:
\begin{lemma}[Vanishing gradient drift]
\label{lem:drift}
Under Assumptions \ref{assum:anchor_lipschitz}, \ref{assum:jacobian}, and
\ref{assum:phi_convergence}, we have
\[
\|d_k\|
\leq
\rho B_{H} L_{F}
\left\|\phi^{k+1}-\phi^{*}\right\|,
\]
and therefore 
\[
\lim_{k\to\infty}\|d_k\|=0.
\]
\end{lemma}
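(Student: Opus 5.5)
The plan is to compute $d_k$ explicitly and observe that the task loss cancels. By \eqref{eq:Jk} and the definition of $\mathcal{J}^{*}$ in Assumption~\ref{assum:limit_theta}, the two objectives share the term $\mathcal{L}_{\mathrm{task}}(\theta)$. They differ only in the anchor inside the quadratic penalty. Differentiating $\frac{\rho}{2}\|\mathcal{H}_{\theta}^{\bar l}-\mathcal{F}\|^{2}$ with respect to $\theta$ for a fixed anchor $\mathcal{F}$ gives, by the chain rule,
\[
\rho\sum_{i,j}\big[\mathcal{H}_{\theta}^{\bar l}-\mathcal{F}\big]_{ij}\nabla_{\theta}[\mathcal{H}_{\theta}^{\bar l}]_{ij}.
\]
Since the anchor is frozen in Step~2, $\mathcal{F}$ contributes no derivative.

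Evaluating at $\theta^{k}$ and subtracting, the terms involving $\mathcal{H}_{\theta^k}^{\bar l}$ cancel, leaving
\[
d_k=\rho\sum_{i,j}V^{k}_{ij}\nabla_{\theta}[\mathcal{H}_{\theta^{k}}^{\bar l}]_{ij},\qquad V^{k}:=\mathcal{F}_{\phi^{*}}-\mathcal{F}_{\phi^{k+1}}.
\]
So $d_k$ is $\rho$ times the linear map of Assumption~\ref{assum:jacobian} applied to $V^k$. If $V^k=0$ the bound is trivial. Otherwise, write $V^k=\|V^k\|\,(V^k/\|V^k\|)$ and use linearity together with Assumption~\ref{assum:jacobian} to get $\|d_k\|\le\rho B_H\|V^k\|$. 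Assumption~\ref{assum:anchor_lipschitz} then gives $\|V^k\|\le L_F\|\phi^{k+1}-\phi^{*}\|$, which is the claimed inequality. The limit statement follows because Assumption~\ref{assum:phi_convergence} gives $\phi^{k+1}\to\phi^{*}$.

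The main obstacle is that Assumptions~\ref{assum:anchor_lipschitz} and \ref{assum:jacobian} hold only on bounded parameter regions. I must therefore ensure that $\{\phi^{k}\}\cup\{\phi^{*}\}$ and $\{\theta^{k}\}$ stay in such regions. For $\phi$ this is immediate, since a convergent sequence together with its limit is bounded. For $\theta$ no boundedness is proved, so I would state explicitly that the iterates $\theta^k$ are assumed to remain in the bounded region where Assumption~\ref{assum:jacobian} applies, which is implicit in the setup. With that, the argument is a direct chain-rule computation followed by two operator-norm estimates.
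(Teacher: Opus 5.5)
Your proposal is correct and follows the paper's proof: you cancel the task loss to get $d_k=\rho\sum_{i,j}([\mathcal{F}_{\phi^{*}}]_{ij}-[\mathcal{F}_{\phi^{k+1}}]_{ij})\nabla_{\theta}[\mathcal{H}_{\theta^{k}}^{\bar l}]_{ij}$, bound it by $\rho B_H\|\mathcal{F}_{\phi^{*}}-\mathcal{F}_{\phi^{k+1}}\|$ using the Jacobian assumption, apply the anchor Lipschitz bound, and pass to the limit using $\phi^{k}\to\phi^{*}$. Your rescaling by $V^k/\|V^k\|$ justifies the operator-norm step more cleanly than the paper's appeal to Cauchy--Schwarz. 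Your caveat is also fair: $\{\theta^k\}$ must stay in the bounded region, which the paper supplies only later through the bounded-iterates assumption and does not list among this lemma's hypotheses.
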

\begin{proof}
By definition of \(\mathcal{J}^{k}\) in \eqref{eq:Jk} and \(\mathcal{J}^{*}\) in Assumption \ref{assum:limit_theta}, we have
\[
d_k = \rho \sum_{i,j} \left( [\mathcal{F}_{\phi^{*}}]_{ij} - [\mathcal{F}_{\phi^{k+1}}]_{ij} \right)
\nabla_{\theta}
[\mathcal{H}_{\theta^{k}}^{\bar{l}}]_{ij}.
\]
Using Assumption \ref{assum:jacobian} and the Cauchy--Schwarz inequality, we obtain
\[
\|d_k\|
\leq
\rho B_{H} \left\| \mathcal{F}_{\phi^{*}} - \mathcal{F}_{\phi^{k+1}} \right\|.
\]
By Assumption \ref{assum:anchor_lipschitz}, we have
\[
\left\| \mathcal{F}_{\phi^{*}} - \mathcal{F}_{\phi^{k+1}} \right\|
\leq
L_{F} \left\|\phi^{*}-\phi^{k+1}\right\|.
\]
Thus
\[
\|d_k\| \leq \rho B_{H} L_{F} \left\|\phi^{k+1}-\phi^{*}\right\|.
\]
Under Assumption \ref{assum:phi_convergence}, when $k \rightarrow \infty$, \(\phi^{k}\to\phi^{*}\), so the right-hand side tends to zero. Therefore, we have
\[
\lim_{k\to\infty}\|d_k\|=0.
\]
\end{proof}

Lemma \ref{lem:drift} shows that the structural update can be viewed as gradient descent on the limit objective \(\mathcal{J}^{*}\) with a vanishing perturbation \(d_k\):
\[
\theta^{k+1} = \theta^{k} - \beta
\left(
\nabla_{\theta}\mathcal{J}^{*}(\theta^{k}) + d_k
\right),
\]
where $d_k  = \nabla_{\theta}\mathcal{J}^{k}(\theta^{k}) - \nabla_{\theta}\mathcal{J}^{*}(\theta^{k}).$


In the subsequent content, we now provide a general convergence lemma for this class of perturbed gradient descent updates.

\begin{lemma}[Convergence under vanishing perturbations]
\label{lem:theta_convergence}
Under Assumptions \ref{assum:limit_theta} and \ref{assum:phi_convergence}.
Let \(d_k\) be defined as in \eqref{eq:drift}. If the structural parameters are updated by
\[
\theta^{k+1} = \theta^{k} - \beta \left( \nabla_{\theta}\mathcal{J}^{*}(\theta^{k})  +  d_k \right),
\]
where \(0<\beta \leq \frac{1}{4L_{\theta}}\). Then
\[
\lim_{k\to\infty} \left\| \nabla_{\theta}\mathcal{J}^{*}(\theta^{k}) \right\| = 0.
\]
\end{lemma}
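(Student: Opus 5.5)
The approach is to treat the update as inexact gradient descent on the fixed limit objective $\mathcal{J}^{*}$ and to run the usual descent-lemma argument with the drift $d_k$ as an error term. Write $g_k=\nabla_{\theta}\mathcal{J}^{*}(\theta^{k})$. Assumption \ref{assum:limit_theta} makes $\nabla_\theta\mathcal{J}^{*}$ $L_\theta$-Lipschitz, so the descent lemma \cite{beck2017first} applied to the step $\theta^{k+1}-\theta^k=-\beta(g_k+d_k)$ gives
\[
\mathcal{J}^{*}(\theta^{k+1})\le \mathcal{J}^{*}(\theta^{k})-\beta\langle g_k,g_k+d_k\rangle+\tfrac{L_\theta\beta^2}{2}\|g_k+d_k\|^2 .
\]

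\textbf{Step 1: per-iteration inequality.} We bound the cross term by Young's inequality, $\beta|\langle g_k,d_k\rangle|\le \tfrac{\beta}{4}\|g_k\|^2+\beta\|d_k\|^2$. We also use $\|g_k+d_k\|^2\le 2\|g_k\|^2+2\|d_k\|^2$. The step-size condition $\beta\le 1/(4L_\theta)$ gives $L_\theta\beta^2\le \beta/4$. Together these yield the perturbed descent inequality
\[
\mathcal{J}^{*}(\theta^{k+1})\le \mathcal{J}^{*}(\theta^{k})-\tfrac{\beta}{2}\|g_k\|^2+\tfrac{5\beta}{4}\|d_k\|^2 .
\]
This is exactly where the constant $1/(4L_\theta)$ in the hypothesis is used.

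\textbf{Step 2: telescoping.} We sum this inequality from $k=0$ to $K$ and use that $\mathcal{J}^{*}$ is bounded below (Assumption \ref{assum:limit_theta}). This gives
\[
\tfrac{\beta}{2}\sum_{k=0}^{K}\|g_k\|^2\le \mathcal{J}^{*}(\theta^0)-\inf\mathcal{J}^{*}+\tfrac{5\beta}{4}\sum_{k=0}^{K}\|d_k\|^2 .
\]
If $\sum_k\|d_k\|^2<\infty$, then $\sum_k\|g_k\|^2<\infty$, and therefore $\|g_k\|\to0$.

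\textbf{Step 3: summability of $\|d_k\|^2$.} I expect this to be the main obstacle. Lemma \ref{lem:drift} gives only $\|d_k\|\le\rho B_HL_F\|\phi^{k+1}-\phi^{*}\|\to0$, and mere convergence to zero is not enough for the telescoping argument: a non-summable error can accumulate increases in $\mathcal{J}^{*}$.

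My first attempt is to use the quantitative content behind Assumption \ref{assum:phi_convergence}. The remark there grounds convergence of $\phi^k$ in the Kurdyka–\L ojasiewicz property together with sufficient decrease (Lemma \ref{lem:rec_descent}). The standard KL finite-length argument gives $\sum_k\|\phi^{k+1}-\phi^k\|<\infty$. Hence the tails $\|\phi^{k+1}-\phi^{*}\|\le\sum_{j>k}\|\phi^{j+1}-\phi^{j}\|$ decay at a rate that is square-summable in all KL-exponent regimes: linear, or polynomial $k^{-(1-\vartheta)/(2\vartheta-1)}$, whose square is summable when $\vartheta<1$. Through Lemma \ref{lem:drift} this transfers to $\sum_k\|d_k\|^2<\infty$.

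If one insists on using only $d_k\to0$, the fallback is a two-stage argument. First, $\liminf_k\|g_k\|=0$: otherwise $\|g_k\|^2\ge\varepsilon^2\ge 5\|d_k\|^2$ eventually, each step decreases $\mathcal{J}^{*}$ by at least $\beta\varepsilon^2/4$, and this contradicts the lower bound. Second, an up-crossing argument between levels $\varepsilon/2$ and $\varepsilon$, using the Lipschitz bound $\|g_{k+1}-g_k\|\le L_\theta\beta\|g_k+d_k\|$. I expect this second stage to need the extra summability anyway, so I would present the proof via the KL route and state clearly that this is where the rate is used.
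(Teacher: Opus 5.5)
\textbf{The gap is in Step 3.} Your Steps 1 and 2 are correct and match the paper up to constants. The paper merges the cross-term coefficients to get $\mathcal{J}^{*}(\theta^{k+1})\le\mathcal{J}^{*}(\theta^{k})-\frac{\beta}{2}\|g_k\|^2+\frac{\beta}{2}\|d_k\|^2$; your $\frac{5\beta}{4}$ makes no difference.

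Your claim that the KL tail rates are square-summable in every regime is false. For $\vartheta\in(1/2,1)$ the rate $k^{-(1-\vartheta)/(2\vartheta-1)}$ has a summable square only if $(1-\vartheta)/(2\vartheta-1)>1/2$, i.e.\ $\vartheta<3/4$. This is not an artifact of the bound. Gradient descent on $\mathcal{L}_{\mathrm{rec}}(\phi)=\phi^{6}$ (modified far from $0$ to have a globally Lipschitz gradient; KL exponent $5/6$) gives $|\phi^{k}|\asymp k^{-1/4}$. With $\mathcal{F}_{\phi}=\phi$ and $\mathcal{H}_{\theta}=\theta$, the drift is exactly $d_k=-\rho\phi^{k+1}$, so $\sum_k\|d_k\|^2=\infty$.

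Moreover, the lemma assumes no KL exponent and no rate at all. Assumption~\ref{assum:phi_convergence} only gives $\phi^{k}\to\phi^{*}$; the KL sentence there is a justification, not a hypothesis; and finite length alone yields no rate. So Step 3 does not deliver $\sum_k\|d_k\|^2<\infty$. As you suspected, your fallback cannot replace it: its first stage correctly gives only $\liminf_k\|g_k\|=0$.

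\textbf{The paper's proof has the same hole.} After the same one-step inequality, it notes that $\mathcal{J}^{*}$ drops by at least $\frac{\beta}{4}\delta^2$ at infinitely many indices $k_j$ with $\|g_{k_j}\|\ge\delta$, and calls this a contradiction with lower boundedness. But at all other indices the inequality allows increases of up to $\frac{\beta}{2}\|d_k\|^2$, whose sum need not be finite. That is exactly the issue you flagged. Once repaired, the paper's argument also yields only $\liminf_k\|g_k\|=0$.

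\textbf{With only $d_k\to0$, the conclusion can fail.} On $\mathbb{R}$, take a bounded $C^{1,1}$ function $\mathcal{J}^{*}$ that alternates ramps of slope $-1$ and height $1$ with rises of height $1$ and slopes $s_m\downarrow0$, joined with bounded curvature. Put $d_k=-2s_m$ while $\theta^{k}$ lies between the midpoints of the $m$-th and $(m+1)$-th ramps. There $\nabla\mathcal{J}^{*}\le s_m$, so each step moves right by at least $\beta s_m$ and each region is crossed in finitely many steps. Steps on a ramp have length $\beta(1+2s_m)<1$ for small $\beta$, so some iterate lands on every ramp. Hence $|g_k|=1$ infinitely often while $d_k\to0$.

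A correct proof therefore needs an extra hypothesis, such as $\sum_k\|\phi^{k+1}-\phi^{*}\|^2<\infty$ (for instance linear convergence of $\phi^{k}$, or a KL exponent $\vartheta<3/4$ for $\mathcal{L}_{\mathrm{rec}}$). Under it, your Steps 1 and 2 together with Lemma~\ref{lem:drift} complete the proof.
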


\begin{proof}
Let \(g_k = \nabla_{\theta}\mathcal{J}^{*}(\theta^{k})\), from the update rule,
\begin{equation}
\label{eq:update-theta}
\theta^{k+1} - \theta^{k}  = -\beta \left( g_k + d_k \right).
\end{equation}
Since \(\nabla_{\theta}\mathcal{J}^{*}\) is \(L_{\theta}\)-Lipschitz continuous, the descent lemma gives
\[
\mathcal{J}^{*}(\theta^{k+1}) \leq \mathcal{J}^{*}(\theta^{k}) + \left\langle g_k, \theta^{k+1}-\theta^{k} \right\rangle + \frac{L_{\theta}}{2} \left\| \theta^{k+1}-\theta^{k} \right\|^{2}.
\]
Substituting the update rule \eqref{eq:update-theta}, we obtain
\begin{align*}
    &\mathcal{J}^{*}(\theta^{k+1}) \\
    \leq& \mathcal{J}^{*}(\theta^{k}) - \beta \|g_k\|^2 - \beta \langle g_k,d_k\rangle \\ 
    &+ \frac{L_{\theta}\beta^2}{2} \left( \|g_k\|^2 + 2\langle g_k,d_k\rangle + \|d_k\|^2 \right).
\end{align*}

Rearranging terms,
\begin{align*}
    &\mathcal{J}^{*}(\theta^{k+1}) \\
    \leq& \mathcal{J}^{*}(\theta^{k}) + \left( -\beta + \frac{L_{\theta}\beta^2}{2} \right) \|g_k\|^2 + \left( -\beta + L_{\theta}\beta^2 \right) \langle g_k,d_k\rangle \\
    &+ \frac{L_{\theta}\beta^2}{2} \|d_k\|^2.
\end{align*}
Since \(0<\beta \leq \frac{1}{4L_{\theta}}\), we have
\(-\beta + L_{\theta}\beta^2 \leq -\frac{3\beta}{4}<0\).
Therefore, we have
\begin{align*}
    &\left(-\beta + L_{\theta}\beta^2\right)\langle g_k,d_k\rangle  \\
    \leq & \left(\beta - L_{\theta}\beta^2\right) \left|\langle g_k,d_k\rangle\right|  \\
    \leq & \frac{\beta - L_{\theta}\beta^2}{2} \left( \|g_k\|^2 + \|d_k\|^2 \right).
\end{align*}
Substituting this bound yields
\[
\mathcal{J}^{*}(\theta^{k+1})
\leq
\mathcal{J}^{*}(\theta^{k}) - \frac{\beta}{2} \|g_k\|^2 + \frac{\beta}{2} \|d_k\|^2.
\]
Now fix any \(\delta>0\). Since \(d_k\to 0\) by Lemma~\ref{lem:drift}, there exists \(K\) such that for all \(k\geq K\),
\[
\frac{\beta}{2}\|d_k\|^2
\leq \frac{\beta}{4}\delta^2.
\]

Assume, for contradiction, that \(\limsup_{k\to\infty}\|g_k\|_2 > 0\). Then there exist infinitely many indices \(k_j\) such that
\(\|g_{k_j}\|_2 \geq \delta\). For such \(k_j \geq K\), we have
\[
\mathcal{J}^{*}(\theta^{k_j+1})
\leq
\mathcal{J}^{*}(\theta^{k_j}) - \frac{\beta}{2}\delta^2 + \frac{\beta}{4}\delta^2
=
\mathcal{J}^{*}(\theta^{k_j}) - \frac{\beta}{4}\delta^2.
\]
Thus \(\mathcal{J}^{*}(\theta^{k_j})\) decreases by at least a positive constant infinitely often, contradicting the lower boundedness of \(\mathcal{J}^{*}\). Hence, we have
\[
\lim_{k\to\infty} \left\| \nabla_{\theta}\mathcal{J}^{*}(\theta^{k}) \right\|  =0.
\]
\end{proof}


Based on the Lemma analysis above, we have the following theoretical results:

\begin{definition}[Self-consistent stationary point]
\label{def:stationary}
A pair \((\theta^{*},\phi^{*})\) is called a self-consistent stationary point of the one-way coupled problem if it satisfies
\[
\nabla_{\phi}\mathcal{L}_{\mathrm{rec}}(\phi^{*}) = 0,
\]
and
\[
\nabla_{\theta}
\left(
\mathcal{L}_{\mathrm{task}}(\theta^{*}) + \frac{\rho}{2} \left\| \mathcal{H}_{\theta^{*}}^{\bar{l}} - \mathcal{F}_{\phi^{*}} \right\|^{2} \right) = 0.
\]
\end{definition}

\begin{assumption}[Bounded iterates]
\label{assum:bounded}
The sequences \(\{\phi^{k}\}\) and \(\{\theta^{k}\}\) generated by Algorithm~\ref{algo:csag-code} remain in compact level sets.
\end{assumption}

\begin{theorem}[Self-consistent stationary convergence]
\label{thm:main}
Under Assumptions \ref{assum:rec}--\ref{assum:bounded}, and let the step sizes
satisfy \(0<\eta < 2/L_{\phi}\) and \(0<\beta \leq 1/(4L_{\theta})\).
Then every limit point \((\theta^{*},\phi^{*})\) of the sequence generated by
Algorithm~\ref{algo:csag-code} is a self-consistent stationary point in the sense of
Definition~\ref{def:stationary}.
\end{theorem}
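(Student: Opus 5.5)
The plan is to take a limit point $(\theta^{*},\phi^{*})$, check the two conditions of Definition~\ref{def:stationary} separately, and combine the lemmas already established with continuity of the gradients on the compact level sets of Assumption~\ref{assum:bounded}.

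Step~1 handles $\phi$. By Assumption~\ref{assum:phi_convergence} the whole sequence $\{\phi^{k}\}$ converges, so the $\phi$-component of any limit point is the stationary point $\phi^{*}$ of that assumption. Consistently with this, Lemma~\ref{lem:rec_stationary} gives $\|\nabla_{\phi}\mathcal{L}_{\mathrm{rec}}(\phi^{k})\|\to 0$. Along any subsequence $\phi^{k_j}\to\phi^{*}$, Lipschitz continuity of $\nabla_{\phi}\mathcal{L}_{\mathrm{rec}}$ (Assumption~\ref{assum:rec}) then yields $\nabla_{\phi}\mathcal{L}_{\mathrm{rec}}(\phi^{*})=0$. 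This is the first condition.

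Step~2 handles $\theta$. Under the single-step implementation \eqref{eq:theta_update}, each update can be written as
\[
\theta^{k+1}=\theta^{k}-\beta\bigl(\nabla_{\theta}\mathcal{J}^{*}(\theta^{k})+d_k\bigr),
\]
with $d_k$ as in \eqref{eq:drift}. Boundedness of the iterates guarantees that the parameter regions in Assumptions~\ref{assum:anchor_lipschitz} and~\ref{assum:jacobian} can be taken to contain all iterates. Lemma~\ref{lem:drift} then gives $d_k\to 0$, and Lemma~\ref{lem:theta_convergence} with $\beta\le 1/(4L_{\theta})$ gives $\|\nabla_{\theta}\mathcal{J}^{*}(\theta^{k})\|\to 0$. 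Let $\theta^{k_j}\to\theta^{*}$ be a subsequence, which exists by compactness. Since $\nabla_{\theta}\mathcal{J}^{*}$ is $L_{\theta}$-Lipschitz by Assumption~\ref{assum:limit_theta}, hence continuous, we get $\nabla_{\theta}\mathcal{J}^{*}(\theta^{*})=0$. By the definition of $\mathcal{J}^{*}$ this is exactly the second condition of Definition~\ref{def:stationary}, with the anchor $\mathcal{F}_{\phi^{*}}$.

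The main obstacle is making sure the $\phi^{*}$ appearing in $\mathcal{J}^{*}$ is the same as the $\phi$-component of the chosen limit point. Assumption~\ref{assum:phi_convergence} gives full-sequence convergence of $\phi^{k}$, so every limit point shares the same $\phi^{*}$, and this resolves the issue. A secondary point is that Lemma~\ref{lem:theta_convergence} relies on lower boundedness of $\mathcal{J}^{*}$ and its contradiction argument. I would make explicit that the iterates lie in the compact set where all the Lipschitz and Jacobian constants hold, which is exactly what Assumption~\ref{assum:bounded} supplies.
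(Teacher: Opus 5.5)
Your proposal is correct relative to the paper's lemmas and follows the paper's proof essentially step for step: Lemma~\ref{lem:rec_stationary} for the anchor, Lemmas~\ref{lem:drift} and~\ref{lem:theta_convergence} for $\theta$, then a convergent subsequence from Assumption~\ref{assum:bounded} and continuity of the gradients. Your explicit use of the full-sequence convergence in Assumption~\ref{assum:phi_convergence}, to identify the $\phi$-component of every limit point with the $\phi^{*}$ defining $\mathcal{J}^{*}$, is a clarification the paper leaves implicit; be aware, though, that like the paper you inherit Lemma~\ref{lem:theta_convergence}, whose closing contradiction step ignores the possible increases of up to $\tfrac{\beta}{2}\|d_k\|^2$ at steps other than the $k_j$, so as written that step only goes through under something like $\sum_k\|d_k\|^2<\infty$ rather than merely $d_k\to 0$.
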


\begin{proof}
By Lemma~\ref{lem:rec_stationary}, the anchor network parameters satisfy
\[
\lim_{k\to\infty}  \left\|  \nabla_{\phi}\mathcal{L}_{\mathrm{rec}}(\phi^{k})  \right\|  =0.
\]
Moreover, by Lemma~\ref{lem:theta_convergence}, the structural parameters satisfy
\[
\lim_{k\to\infty} \left\| \nabla_{\theta}\mathcal{J}^{*}(\theta^{k}) \right\| =0,
\]
where
\[
\mathcal{J}^{*}(\theta)
= 
\mathcal{L}_{\mathrm{task}}(\theta) + \frac{\rho}{2} \left\| \mathcal{H}_{\theta}^{\bar{l}} - \mathcal{F}_{\phi^{*}} \right\|^{2}.
\]

Since \(\{\phi^{k}\}\) and \(\{\theta^{k}\}\) remain in compact sets by Assumption~\ref{assum:bounded}, the sequence has at least one limit point. Let \((\theta^{*},\phi^{*})\) be any such limit point, and let \(\{(\theta^{k_j},\phi^{k_j})\}\) be a subsequence converging to it. By continuity of the gradients and the two limits above, we obtain
\[
\nabla_{\phi}\mathcal{L}_{\mathrm{rec}}(\phi^{*}) = 0,
\]
and
\[
\nabla_{\theta}
\left(
\mathcal{L}_{\mathrm{task}}(\theta^{*})  + \frac{\rho}{2} \left\| \mathcal{H}_{\theta^{*}}^{\bar{l}} - \mathcal{F}_{\phi^{*}} \right\|^{2}
\right) =0.
\]
Thus \((\theta^{*},\phi^{*})\) satisfies the conditions of
Definition~\ref{def:stationary}.
\end{proof}

\begin{remark}
Although the stationarity of $\phi^k$ and the stationarity of $\theta^k$ are established separately, Theorem \ref{thm:main} additionally guarantees that their limit points are mutually consistent: the structural stationary point is computed with respect to the limiting anchor $\phi^*$, rather than an intermediate iterate.
\end{remark}


\section{Numerical Experimental}
\label{sec:Experiments}

\subsection{Dataset and Experimental Setup}
In our work, we evaluate our method on six widely used node classification benchmarks: Cora, CiteSeer, CS, Wisconsin, Texas, and Cornell. They cover both homophilous and heterophilous settings; these datasets enable a systematic evaluation across diverse structural regimes.

Cora and CiteSeer are citation networks where nodes denote publications and edges represent citations. Node features are sparse bag-of-words vectors, and labels correspond to research areas. Their strong homophily makes them standard benchmarks for evaluating neighborhood aggregation.

CS is a co-authorship graph from the Microsoft Academic Graph, with authors as nodes and collaborations as edges. Its larger scale and higher density provide a more challenging testbed for representation learning.

Wisconsin, Texas, and Cornell are heterophilous WebKB graphs in which nodes represent web pages and edges correspond to hyperlinks. The frequent connections between dissimilar nodes make them particularly challenging for conventional GNNs.

During the training phase, all learnable parameters are optimized by the Adam optimizer with an initial learning rate of 0.1. We fix the hidden representation dimension at 64 across all evaluations. Furthermore, for the specific hyperparameters introduced in our framework, the constraint coefficient $\rho$ is set to $1 \times 10^{-4}$, and the update interval $\tau$ is set to 1 by default. All experiments are implemented in Python 3.8 using PyTorch 1.9~\cite{paszke2017automatic} and conducted on an Ubuntu 20.04.6 workstation equipped with a Hygon C86 7381 64-core CPU and 64 GB of memory.

\subsection{Numerical Results}
    
\begin{table*}[thb]\centering
    \caption{Comparison experiment results for graph dataset: Node Classification Accuracy (\%). The best and second-best results are in \textbf{bold} and \underline{underlined}, respectively.}
    \label{tab:clsnode_exp1}
    \resizebox{1\textwidth}{!}
    {
    \begin{tabular}{*{8}{c}}
        \toprule
        \multirow{2}*{Methods} &  \multicolumn{3}{c}{Homophily}  &  \multicolumn{3}{c}{Heterophily}  & \multirow{2}*{Average} \\
        \cmidrule(lr){2-4}  \cmidrule(lr){5-7} \cmidrule(lr){7-7}
        ~ &  Cora \cite{yang2016revisiting}  & Citeseer \cite{yang2016revisiting} & CS \cite{yang2016revisiting}  &  Wisconsin \cite{Pei2020Geom-GCN} &  Texas \cite{Pei2020Geom-GCN} &  Cornell \cite{Pei2020Geom-GCN}  \\
        \midrule
        GraphSAGE \cite{hamilton2017inductive}  &  0.636$\pm$0.150 & 0.344$\pm$0.095   &  0.732$\pm$0.154 &  0.300$\pm$0.170  &  \underline{0.464$\pm$0.213}  &  0.227$\pm$0.087   &  0.451 \\
        
        GCN \cite{kipf2017semi}  &  0.724$\pm$0.139  & 0.589$\pm$0.158   &   0.817$\pm$0.073 & 0.294$\pm$0.171  &  0.327$\pm$0.246    & 0.289$\pm$0.122   &  0.507  \\

        APPNP \cite{KlicperaBG19} &   0.539$\pm$0.170  & \underline{0.621$\pm$0.108}  &  0.610$\pm$0.107  &  0.323$\pm$0.172   & 0.459$\pm$0.199  &     0.221$\pm$0.087   & 0.462   \\

        GIN \cite{xu2018how} &    0.291$\pm$0.079  &  0.162$\pm$0.073  &  0.228$\pm$0.016   & 0.245$\pm$0.150   &  0.245$\pm$0.158 & 0.270$\pm$0.109    &  0.240  \\

        ChebNet \cite{defferrard2016convolutional} &  0.614$\pm$0.149 &  0.312$\pm$0.116  &  \textbf{0.889}$\pm$\textbf{0.046}  &  \underline{0.462$\pm$0.153} &  0.418$\pm$0.273    & \underline{0.375$\pm$0.147}    & 0.512    \\

        GAT \cite{velickovic2018graph} &   \underline{0.770$\pm$0.037}  & 0.608$\pm$0.080   &   0.830$\pm$0.072  & 0.258$\pm$0.184   & 0.429$\pm$0.218   & 0.248$\pm$0.071    & 0.524  \\
        DSAL &  \textbf{0.794}$\pm$\textbf{0.023}    &   \textbf{0.678}$\pm$\textbf{0.040}  &   \underline{0.854$\pm$0.022}  &   \textbf{0.626}$\pm$\textbf{0.102}  &   \textbf{0.632}$\pm$\textbf{0.049} &   \textbf{0.541}$\pm$\textbf{0.060}   &  \textbf{0.688}  \\
        \bottomrule
    \end{tabular}
    }
\end{table*}

Table \ref{tab:clsnode_exp1} compares node classification accuracy on the six benchmark datasets. The evaluation is conducted on six publicly available benchmark datasets, including three homophilous graphs (Cora, Citeseer, and CS) and three heterophilous graphs (Wisconsin, Texas, and Cornell). Performance is measured in terms of the mean accuracy and standard deviation across multiple runs. The proposed DSAL achieved the highest average accuracy in the six datasets, outperforming all baseline models.

On homophilous datasets, DSAL shows balanced performance. Specifically, it achieves the highest accuracy on Cora and Citeseer, with scores of 0.794 and 0.678, respectively. On the CS dataset, DSAL obtains the second best performance, achieving an accuracy of 0.854, slightly below ChebNet, which reaches 0.889. On these datasets, DSAL combines the benefits of the two aggregation branches and remains competitive with GCN and GAT.

On heterophilous graphs, traditional GNN models such as GCN, GraphSAGE, and GIN often suffer from performance degradation when dealing with highly heterophilous graphs. Their performance is generally lower on these datasets, where neighborhood aggregation can mix information from nodes with different labels. As a result, their performance on the Wisconsin, Texas, and Cornell datasets is relatively limited, with accuracy generally ranging between 0.2 and 0.4. In contrast, DSAL achieves better improvements on these heterophilous datasets. Specifically, it reaches accuracies of 0.626 on Wisconsin and 0.541 on Cornell, corresponding to absolute gains of 16.4 and 16.6 percentage points over the second-best model, ChebNet. These results indicate that DSAL can better adapt to heterophilous neighborhood structures than the baseline methods.

\subsection{Ablation Analysis}

To validate the effectiveness of the core components within the DSAL model and the mechanisms of their interaction, we conducted extensive ablation studies on six graph datasets. Table \ref{tab:abstudy_exp} presents a performance comparison between the full model and various variants on node classification tasks. The experiments primarily evaluated the individual contributions of the alignment loss function ($\mathcal{L}_{\text{align}}$) and the gated fusion mechanism (Gate).

\begin{table*}[h]\centering
        \caption{Ablation experimental results for the graph dataset.}
    \label{tab:abstudy_exp}
    \resizebox{1\textwidth}{!}
    {
    \begin{tabular}{*{8}{c}}
        \toprule
        \multirow{2}*{ Method} &  \multicolumn{3}{c}{Homophily}  &  \multicolumn{3}{c}{Heterophily}  & \multirow{2}*{Average} \\
        \cmidrule(lr){2-4}  \cmidrule(lr){5-7} \cmidrule(lr){7-7}
        ~ &  Cora & Citeseer & CS  &  Wisconsin &  Texas &  Cornell  \\
        \midrule
        GCN   &  0.724$\pm$0.139  & 0.589$\pm$0.158   &   0.817$\pm$0.073 & 0.294$\pm$0.171  &  0.327$\pm$0.246    & 0.289$\pm$0.122   &  0.507    \\
        + $\mathcal{L}_{\text{align}}$ (w/o GraphSAGE)   &   0.792$\pm$0.049   &   0.661$\pm$0.053   &   0.871$\pm$0.055      &     0.231$\pm$0.167 &  0.289$\pm$0.209  &   0.308$\pm$0.129   & 0.525\\

        \textbf{Improv.  (\%)} & +9.39 & +12.22 & +6.61 & -21.43 & -11.62 & +6.57 & +3.68   \\
        
        \midrule
        GraphSAGE  &   0.636$\pm$0.150 & 0.344$\pm$0.095   &  0.732$\pm$0.154 &  0.300$\pm$0.170  &  0.464$\pm$0.213  &  0.227$\pm$0.087   &  0.451    \\
        + $\mathcal{L}_{\text{align}}$ (w/o GCN)   &   0.669$\pm$0.145  &   0.356$\pm$0.118  &  0.727$\pm$0.135 &        0.331$\pm$0.177   &   0.527$\pm$0.187  &   0.260$\pm$0.091  &  0.478 \\
        \textbf{Improv. (\%)} &  +5.19  &  +3.49  &  -0.68  &  +10.33  &  +13.58  &  +14.54  &  +6.18  \\  
        \midrule
        w/o $\mathcal{L}_{\text{align}}$ &   0.422$\pm$0.154  &   0.299$\pm$0.135  &  0.349$\pm$0.103  &   0.176$\pm$0.159 &  0.318$\pm$0.217  &  0.356$\pm$0.124  &    0.320 \\
        
        w/o Gate &  0.681$\pm$0.140   &   0.434$\pm$0.160 &   0.867$\pm$0.017  &       0.402$\pm$0.154  &   0.441$\pm$0.214  &   0.284$\pm$0.142  &  0.518 \\
        \textbf{DSAL}  &   0.794$\pm$0.023    &   0.678$\pm$0.040  &   0.854$\pm$0.022  &   0.626$\pm$0.102  &   0.632$\pm$0.049 &   0.541$\pm$0.060  &   0.688  \\
        \bottomrule
    \end{tabular}
    }
\end{table*}

\textbf{(a) Contribution of the Alignment Loss Function $\mathcal{L}_{\text{align}}$.} The experiments first investigated the role of $\mathcal{L}_{\text{align}}$ within a single graph encoder branch in Figure \ref{fig:csag-gnn}. The results demonstrate that the effect of $\mathcal{L}_{align}$ differs between the GCN and GraphSAGE branches.

For the GCN branch (w/o GraphSAGE), introducing $\mathcal{L}_{\text{align}}$ improves performance on homophilic graphs, including gains of 9.39\% on Cora and 12.22\% on Citeseer. However, it also increases the sensitivity of GCN to the homophily assumption, leading to performance drops on heterophilous datasets such as Wisconsin and Texas, with decreases of 21.43\% and 11.62\%, respectively.

For the GraphSAGE branch (without GCN), adding $\mathcal{L}_{\text{align}}$ improves performance on heterophilous graphs, with an average gain of more than 12\% on the three corresponding datasets. In contrast, the improvements on homophilic graphs are limited, and a slight decrease is observed on the CS dataset, with a reduction of 0.68\%. When $\mathcal{L}_{\text{align}}$ is removed entirely, the dual-branch model shows a substantial drop in average accuracy, decreasing to 0.320. This indicates that $\mathcal{L}_{\text{align}}$ plays an important role in aligning representations learned by the two encoders within a shared feature space.

\textbf{(b) Effect of the Gating Mechanism.} The ``w/o Gate'' variant removes the dynamic gating mechanism and replaces it with a standard feature fusion strategy. As shown in Table \ref{tab:abstudy_exp}, this leads to a reduction in average accuracy from 0.688 to 0.518. This suggests that simple operations such as concatenation or summation are not sufficient to effectively combine representations from the two branches in CSAG-Layer. In particular, the GCN and GraphSAGE branches capture different aspects of the graph structure, and direct fusion may introduce interference from less reliable representations, especially under heterophily settings. By introducing the gating mechanism, DSAL learns adaptive weights for features from different branches conditioned on local node context, allowing a more flexible integration of information from both homophilous and heterophilous neighborhoods.

\textbf{(c) Stability of the Model Performance.} Beyond the gains in accuracy, the ablation studies also examine the stability of the proposed model. The results of standard deviation across runs show that DSAL generally has lower variance on most datasets, such as 0.023 on Cora and 0.049 on Texas. In comparison, removing key components leads to less stable performance, particularly for the variant without $\mathcal{L}_{\text{align}}$, which shows higher variance on Texas. These results suggest that the combination of the proposed modules helps improve not only performance but also consistency across different data splits and random initializations.

\subsection{Analysis of Topological Perturbation Experiments}
\label{sec:edgePeerturb}

To evaluate model robustness under structural noise, we conduct edge perturbation experiments on both a homophilic dataset (Citeseer) and a heterophilous dataset (Cornell). The graph topology is modified by randomly adding or removing 10\%, 30\%, and 50\% of edges. Tables \ref{tab:edgeadd_exp} and \ref{tab:edgeremove_exp} present the node classification results of DSAL and baseline models under edge addition and edge removal settings, respectively. Specifically, we utilized standard training procedures during the training phase, while in the testing phase, a proportion of edges is randomly added or removed to assess model robustness under topology perturbations.

\subsubsection*{(1) Edge Addition Analysis}

\begin{table*}[h]\centering
        \caption{Edge addition perturbation experimental results for the graph dataset on the three perturbation ratios.} 
    \label{tab:edgeadd_exp}
    \resizebox{1\textwidth}{!}
    {
    \begin{tabular}{*{7}{c}}
        \toprule
        \multirow{2}*{Methods} &  \multicolumn{3}{c}{Citeseer}  &  \multicolumn{3}{c}{Cornell}   \\
        \cmidrule(lr){2-4}  \cmidrule(lr){5-7} 
        ~ &  10\% & 30\% & 50\%  &  10\% &  30\% &  50\%  \\
        \midrule
        APPNP &    0.521$\pm$0.100  &  0.436$\pm$0.083 &  0.356$\pm$0.069       &    0.224$\pm$0.086    &   0.240$\pm$0.092 &    0.251$\pm$0.109    \\
        ChebNet &  0.288$\pm$0.089  &  0.293$\pm$0.059   &   0.289$\pm$0.075    &     0.427$\pm$0.172   &    0.410$\pm$0.161    &  0.386$\pm$0.136  \\
        GAT &   0.568$\pm$0.057    &    0.500$\pm$0.048   &   0.474$\pm$0.048    &     0.267$\pm$0.097   &    0.291$\pm$0.142   &    0.237$\pm$0.089  \\
        GCN  &  0.566$\pm$0.147 &     0.524$\pm$0.124   &   0.484$\pm$0.115   &      0.300$\pm$0.129   &0.291$\pm$0.130   &  0.286$\pm$0.131  \\
        GIN &   0.170$\pm$0.081  &    0.158$\pm$0.066  &     0.148$\pm$0.056    &   0.291$\pm$0.118  &   0.294$\pm$0.121  &  0.289$\pm$0.118   \\
        GraphSAGE  &  0.313$\pm$0.097   &    0.300$\pm$0.082  &     0.307$\pm$0.069  &   0.251$\pm$0.051   &   0.316$\pm$0.116  &   0.273$\pm$0.103 \\
        
        DSAL &    \textbf{0.629}$\pm$\textbf{0.036}  &      \textbf{0.601}$\pm$\textbf{0.034} &   \textbf{0.573}$\pm$\textbf{0.036}       &    \textbf{0.459}$\pm$\textbf{0.092}  &   \textbf{0.464}$\pm$\textbf{0.053}   &   \textbf{0.451}$\pm$\textbf{0.065}  \\
        
        \bottomrule
    \end{tabular}
    }
\end{table*}

In this experiment, adding random edges introduces spurious connections in the graph, which may cause GNN models to aggregate less relevant or noisy neighbor information during message passing and further affect performance. As shown in Table \ref{tab:edgeadd_exp}, models that rely more heavily on the homophily assumption tend to experience noticeable performance drops as the perturbation ratio increases from 10\% to 50\%. For example, on the Citeseer dataset, the accuracy of APPNP decreases from 0.521 to 0.356, while on the Cornell dataset, GAT and GCN maintain relatively low performance across all perturbation levels.

In comparison, DSAL shows more stable performance under edge addition perturbations. On Citeseer, it achieves an accuracy of 0.573 even at a perturbation ratio of 50\%, outperforming the other evaluated models. On the Cornell dataset, DSAL also maintains the best results across all settings, with accuracies of 0.459, 0.464, and 0.451 under perturbation ratios of 10\%, 30\%, and 50\%, respectively. These results suggest that the proposed feature fusion and gating design can reduce the impact of noisy edges by limiting the influence of less reliable structural information during aggregation.

\subsubsection*{(2) Edge Removal Analysis}

\begin{table*}[h]\centering
        \caption{Edge removal perturbation experimental results for the graph dataset on the three perturbation ratios.} 
    \label{tab:edgeremove_exp}
    \resizebox{1\textwidth}{!}
    {
    \begin{tabular}{*{7}{c}}
        \toprule
        \multirow{2}*{Methods} &  \multicolumn{3}{c}{Citeseer}  &  \multicolumn{3}{c}{Cornell}   \\
        \cmidrule(lr){2-4}  \cmidrule(lr){5-7} 
        ~ &  10\% & 30\% & 50\%  &  10\% &  30\% &  50\%  \\
        \midrule
        APPNP &    0.567$\pm$0.138   &  0.610$\pm$0.101  &    0.572$\pm$0.140   &    0.262$\pm$0.076  &   0.262$\pm$0.076   &      0.281$\pm$0.114    \\
        ChebNet &  0.310$\pm$0.087  &   0.317$\pm$0.097  &    0.276$\pm$0.097 &      0.340$\pm$0.138  &   0.418$\pm$0.176  &    0.386$\pm$0.139         \\
        GAT &  0.608$\pm$0.078  &    0.617$\pm$0.065  &   0.595$\pm$0.058   &       0.254$\pm$0.093  &  0.294$\pm$0.110 &     0.229$\pm$0.068   \\
        GCN  &  0.585$\pm$0.162  &     0.580$\pm$0.151  &    0.570$\pm$0.149   &  0.329$\pm$0.113  &    0.302$\pm$0.130 &    0.300$\pm$0.129   \\
        GIN &  0.160$\pm$0.072   &    0.167$\pm$0.085  &   0.150$\pm$0.064  &    0.289$\pm$0.115  & 0.282$\pm$0.115  &  0.283$\pm$0.110 \\
        GraphSAGE  & 0.331$\pm$0.097  &   0.343$\pm$0.100  &    0.327$\pm$0.101   &  0.256$\pm$0.098  &   0.227$\pm$0.064   &   0.251$\pm$0.055   \\
        
        DSAL &   \textbf{0.670}$\pm$\textbf{0.037} &  \textbf{0.659}$\pm$\textbf{0.045} &   \textbf{0.650}$\pm$\textbf{0.035}  &       \textbf{0.497}$\pm$\textbf{0.082} &  \textbf{0.462}$\pm$\textbf{0.077}  &  \textbf{0.467}$\pm$\textbf{0.106}   \\
        \bottomrule
    \end{tabular}
    }
\end{table*}

Unlike edge addition, edge removal reduces graph connectivity and leads to the loss of structural information, which can hinder message passing and feature aggregation. As shown in Table \ref{tab:edgeremove_exp}, this setting has a negative effect on all baseline models. For instance, on the Citeseer dataset, when the edge removal ratio increases to 50\%, the accuracy of GCN and GAT drops to 0.570 and 0.595, respectively. On the Cornell dataset, most traditional GNNs achieve accuracy below 0.350 under the same conditions.

In comparison, DSAL shows more stable performance under edge removal. On Citeseer, it achieves an accuracy of 0.650 even when half of the edges are removed, outperforming all baseline models. On the Cornell dataset, DSAL also maintains relatively stable results, with an accuracy of 0.467 and a standard deviation of 0.106, and performs better than methods such as ChebNet under most settings. These results indicate that the proposed model is less sensitive to reduced connectivity and can better utilize the remaining structural and feature information in sparse graphs.

The results in the two tables show that under stronger topological perturbations, such as 30\% and 50\% edge modification, the variance of models like GCN and APPNP often exceeds 0.1 on the Citeseer dataset. This suggests that their performance is more sensitive to randomness in the perturbation process. In contrast, DSAL maintains a lower standard deviation. This stability is mainly related to the dual-branch architecture and the gating mechanism. When the graph structure is heavily perturbed, the gating module adjusts the contribution of each branch and reduces the influence of less reliable structural signals, relying more on relatively stable node-level information. This helps maintain more consistent predictions under both edge addition and edge removal settings.

\subsection{Hyperparameter Analysis}

\subsubsection*{(1) Effect of the different hyperparameter $\rho$ and $\tau$}

To evaluate the sensitivity of DSAL to hyperparameters, we conduct experiments on six benchmark datasets. The analysis focuses on two parameters: the constraint hyperparameter $\rho$, which controls the relaxation strength of feature–structure alignment, and the update interval $\tau$ for alternating optimization between the encoder and decoder in the feature view. The results are shown in Figure \ref{fig:sensitivity}, where the solid curves represent average classification accuracy under different settings, and the shaded regions indicate the corresponding standard deviations.

\begin{figure*}[h]
    \centering
    \includegraphics[width=1.0\linewidth]{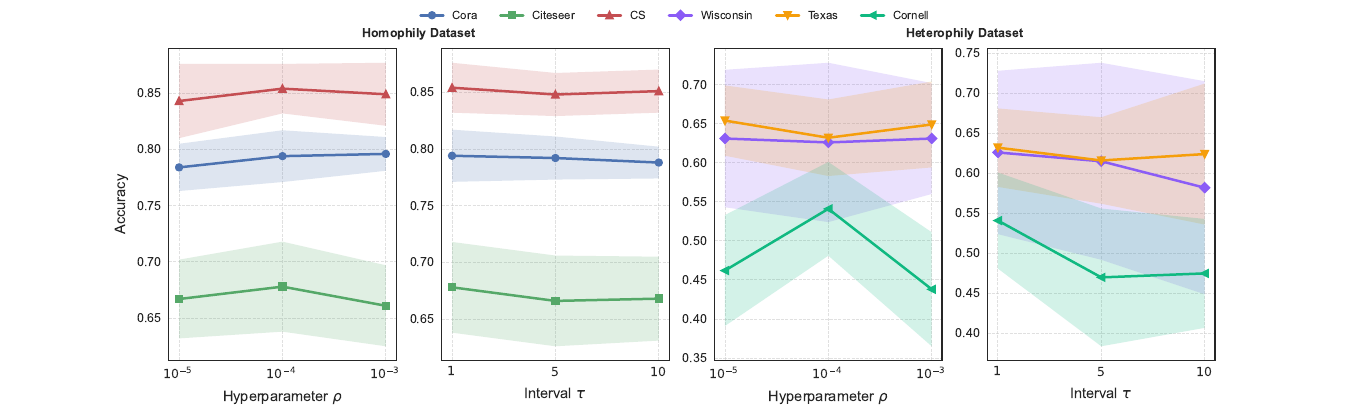}
    \caption{The sensitivity analysis of hyperparameter $\gamma$ and update interval $\tau$ on different graph datasets.}
    \label{fig:sensitivity}
\end{figure*}

The subplots corresponding to $\rho$ in Figure \ref{fig:sensitivity} show that when $\rho$ varies within the range of $[10^{-5}, 10^{-3}]$, the performance remains relatively stable on both homophilous datasets, such as Cora and CS, and heterophilous datasets, such as Texas and Wisconsin. The changes in accuracy are small, and the shaded regions indicating variance do not exhibit noticeable variation. This observation implies that the model's sensitivity to $\rho$ is relatively low for most datasets.

As shown in the subplots for $\tau$ in Figure  \ref{fig:sensitivity}, the performance curves remain relatively stable on homophilous datasets, suggesting that the update interval has limited influence in these cases. In contrast, on heterophilous datasets, especially Cornell, the average accuracy decreases as $\tau$ increases beyond 1, and the variance becomes slightly larger. In homophilous graphs, neighboring nodes tend to share similar features, so the alignment signal has a limited effect on aggregation. In heterophilous graphs, where neighbors are often less consistent, the model relies more on feature-level information to correct structural noise. A larger $\tau$ reduces the frequency of this alignment process, which may weaken this corrective effect and lead to a mild drop in performance.

\subsubsection*{(2) Effect of the Allocation Ratio $\gamma$}

In the structural design of DSAL as shown in Figure \ref{fig:csag-gnn}, we adopt a channel-split gated convolution mechanism, where a hyperparameter $\gamma \in (0,1)$ controls the split of feature channels in a dual-branch architecture. Specifically, a proportion $\gamma$ of channels is processed by the GCN branch, while the remaining $(1-\gamma)$ is processed by the GraphSAGE branch. To study the effect of this allocation, we conduct ablation experiments on six benchmark datasets with $\gamma$ set to ${0.3, 0.5, 0.8}$, and report the results in Table \ref{tab:alloc_exp}.

\begin{table*}[thb]\centering
        \caption{Allocate ratio $\gamma$ sensitive experimental results for the graph dataset.} 
    \label{tab:alloc_exp}
    \resizebox{0.9\textwidth}{!}
    {
    \begin{tabular}{*{7}{c}}
        \toprule
        \multirow{2}*{ $\gamma$} &  \multicolumn{3}{c}{Homophily}  &  \multicolumn{3}{c}{Heterophily}   \\
        \cmidrule(lr){2-4}  \cmidrule(lr){5-7} 
        ~ &  Cora & Citeseer & CS  &  Wisconsin &  Texas &  Cornell  \\
        \midrule

        0.3   &  0.794$\pm$0.023    &   0.678$\pm$0.040  &   0.854$\pm$0.022  &   0.626$\pm$0.102  &   0.632$\pm$0.049 &   0.541$\pm$0.060   \\
                
        0.5   &   0.783$\pm$0.024   &   0.680$\pm$0.014   &    0.827$\pm$0.033  &   0.561$\pm$0.101  &  0.630$\pm$0.065  & 0.468$\pm$0.079 \\

        0.8  &   0.804$\pm$0.008  &    0.662$\pm$0.042  &   0.852$\pm$0.033     &   0.509$\pm$0.096   &   0.629$\pm$0.047   &   0.475$\pm$0.078  \\
        \bottomrule
    \end{tabular}
    }
\end{table*}

When $\gamma$ is set to 0.3, more feature channels are assigned to the GraphSAGE branch. This provides additional ability for processing homophilous patterns, while the remaining GCN channels still contribute structural information through the gating mechanism. Under this setting, the model performs relatively well on heterophilous datasets, without significant performance loss on homophilous datasets. Based on these results, $\gamma = 0.3$ is used as the default setting in our experiments.

\section{Discussions}

\subsection{Alternating Update Vs  Joint Update}
\label{sec:ALTvsNormal}

The joint mode training strategy directly combines the task loss $\mathcal{L}_{\text{task}}$ with auxiliary losses such as $\mathcal{L}_{\text{align}}$ and $\mathcal{L}_{\text{rec}}$, which updates all parameters jointly in the one-step forward pass. This type of update method is frequently seen in earlier algorithms. To compare this approach with the alternating optimization strategy adopted in this work, which updates the feature view (Step 1) and structural view (Step 2) in an alternating manner, we conduct experiments on six benchmark datasets. The results are presented in Table \ref{tab:modeVs_exp}.

\begin{table*}[thb]\centering
        \caption{The experimental results of different update modes for the graph dataset.}
    \label{tab:modeVs_exp}
    \resizebox{0.9\textwidth}{!}
    {
    \begin{tabular}{*{7}{c}}
        \toprule
        \multirow{2}*{Mode} &  \multicolumn{3}{c}{Homophily}  &  \multicolumn{3}{c}{Heterophily}   \\
        \cmidrule(lr){2-4}  \cmidrule(lr){5-7} 
        ~ &  Cora & Citeseer & CS  &  Wisconsin &  Texas &  Cornell  \\
        \midrule


        Alt Mode   &    0.794$\pm$0.023    &   0.678$\pm$0.040  &   0.854$\pm$0.022  &   0.626$\pm$0.102  &   0.632$\pm$0.049 &   0.541$\pm$0.060  \\

        Joint Mode &    0.799$\pm$0.010   &    0.673$\pm$0.043  &   0.856$\pm$0.017  &       0.594$\pm$0.157   &   0.624$\pm$0.078    &  0.487$\pm$0.081\\
        \bottomrule
    \end{tabular}
    }
\end{table*}

On homophilous datasets, the two update modes show comparable performance, with the joint mode even achieving slightly higher accuracy on Cora and CS. This suggests that when node features are consistent with local graph structure, the optimization path is relatively smooth, and both joint and alternating updates tend to converge to similar solutions.

In heterophilous graphs with more complex connectivity patterns, the alternating update strategy performs more consistently than the joint mode. As shown in Table \ref{tab:modeVs_exp}, it achieves accuracies of 0.626 and 0.541 on Wisconsin and Cornell, respectively, compared with 0.594 and 0.487 for the ``Joint update''. The joint update style also shows higher variance on heterophilous datasets, such as 0.157 on Wisconsin, indicating less stable training behavior under these settings. 

These differences suggest that the performance gap under heterophily may be related to the interaction between the two optimization objectives. Specifically, in heterophilous graphs, joint optimization in a single step can cause gradients from different losses to interfere, making it difficult to balance classification and alignment objectives. In contrast, the “Alt-Mode” proposed in this work decouples the feature view and the structural view at the optimization level. This staged training strategy reduces interference between the two views, allowing each branch to be optimized more independently and converge within its own representation space. As a result, it produces more stable and discriminative representations under heterophilous structural noise.

\subsection{Overhead Analysis}

In this section, we analyze the computational overhead of DSAL in terms of model size, computational cost, and inference time. We compare DSAL with representative GNN baselines under different hidden dimensions to assess the resource consumption.

\begin{table*}[thb]\centering
        \caption{The overhead performance of different methods on the graph datasets under varying hidden dimensions.} 
    \label{tab:overhead_exp}
    \resizebox{0.8\textwidth}{!}
    {
    \begin{tabular}{*{11}{c}}
        \toprule
        \multirow{2}*{Method} &  \multicolumn{3}{c}{Params (M)}  &  \multicolumn{3}{c}{FLOPs (M)}   &  \multicolumn{3}{c}{Inference Time (ms)}  \\
        \cmidrule(lr){2-4}  \cmidrule(lr){5-7}  \cmidrule(lr){8-10} 
        ~  & 32 & 64 & 96    & 32 & 64  & 96  & 32 & 64  & 96  \\
        \toprule 
        GCN   &    0.047        &      0.096     &     0.147                              & 127.5   & 260.6  &  399.3                     &  2.742  &   2.791  &   2.852  \\

        GraphSAGE     &   0.094     &      0.192     &    0.295                                 &   255.1    &      521.3    & 798.6    &                          1.537    &   1.580  &   1.601   \\ 
  
        DSAL    &   0.082       &      0.165     &     0.253                           &      223.2   &            448.1      &  686.6   &              4.061    &    4.036 &    4.242   \\
        \bottomrule
    \end{tabular}
    }
\end{table*}

As shown in Table \ref{tab:overhead_exp}, we evaluate the overhead of the proposed DSAL alongside GCN and GraphSAGE methods on the Cora dataset. From a theoretical complexity perspective, DSAL demonstrates balanced performance. It requires fewer parameters and computational operations (FLOPs) compared to GraphSAGE under all hidden dimensions. However, we observe that the inference time of DSAL is higher than that of other baseline methods. This discrepancy between lower FLOPs and higher inference latency is likely caused by the current implementation limitations. Specifically, the channel-splitting and gated fusion mechanisms in DSAL require frequent tensor slicing, concatenation, and discontinuous memory access. Unlike the GCN and GraphSAGE, which benefit from highly fused, natively CUDA operators in standard libraries, our current implementation operates at a higher framework level, leading to increased memory-bound overhead. Future work can focus on developing customized CUDA kernels to fuse these operations, thereby aligning the empirical inference time with its theoretical efficiency.

\section{Conclusion and Future Work}

In this paper, we study the optimization of GNNs under noisy and unreliable graph topologies and propose a structure–feature constrained learning method derived from a relaxed optimization formulation. By treating the node feature representation as a relatively stable reference, we adopt a cyclic alternating update strategy. This design separates the optimization of structural and feature-related features, thereby reducing the gradient interference induced by structural noise and mitigating the representation drift observed in joint optimization. Meanwhile, the proposed CSAG-Layer introduces an adaptive gating mechanism that balances the global smoothing effect of graph convolution with the preservation of local information of neighborhood aggregation. Experiments show that, compared with conventional message-passing and joint optimization methods, our approach achieves more balanced and discriminative representations with only a modest increase in model complexity.

The current implementation has two main limitations:

\textbf{Inference efficiency:} Although DSAL has balanced theoretical FLOPs and parameter size compared with standard GNNs, its practical inference latency is higher. This is mainly due to channel-splitting and adaptive gating operations, which introduce non-contiguous memory access and increase the inference latency.

\textbf{Static channel allocation:} The channel split ratio $\gamma$ is fixed across nodes and layers, limiting the model’s ability to adaptively allocate representation capacity to heterophilous structural patterns.

Future research will focus on improving both efficiency and adaptability. From a systems perspective, we plan to design fused GPU kernels to reduce memory overhead in channel-wise operations. From an algorithmic perspective, we will investigate adaptive optimization strategies to control the allocation ratio $\gamma$ during training dynamically. In addition, extending the proposed constrained alignment framework to dynamic graphs remains an important direction for future study.

\section*{Acknowledgment}
\noindent{\bf Funding:} This research is supported by the National Natural Science Foundation of China (NSFC) grants 92473208, 12401415, the Key Program of National Natural Science of China 12331011, the 111 Project (No. D23017),  the Natural Science Foundation of Hunan Province (No. 2025JJ60009), Project of Scientific Research Fund of Hunan Provincial Science and Technology Department (No. 25B0148). 

\noindent{\bf Data Availability:} Enquiries about data/code availability should be directed to the authors.

\noindent{\bf Competing interests:} The authors have no competing interests to declare that are relevant to the content of this paper.






\bibliographystyle{IEEEtran}    
\bibliography{refs}

\end{document}